
\documentclass{article}

\usepackage{microtype}
\usepackage{graphicx}
\usepackage{subfigure}
\usepackage{booktabs} 
\usepackage{enumerate}

\usepackage{hyperref}


\usepackage[accepted]{icml2024}


\usepackage{amsmath}
\usepackage{amssymb}
\usepackage{mathtools}
\usepackage{amsthm}

\usepackage{multirow}

\usepackage[capitalize,noabbrev]{cleveref}

\theoremstyle{plain}
\newtheorem{theorem}{Theorem}[section]

\newtheorem{lemma}[theorem]{Lemma}

\theoremstyle{definition}

\theoremstyle{remark}

\usepackage[textsize=tiny]{todonotes}

\icmltitlerunning{} 

\usepackage{xcolor}



\newcommand*{\vertbar}{\rule[-0.25ex]{0.5pt}{1.5ex}}

\newcommand{\norm}[1]{\left\Vert#1\right\Vert}

\newcommand{\set}[1]{\left\{#1\right\}}
\newcommand{\parr}[1]{\left (#1\right )}
\newcommand{\brac}[1]{\left [#1\right ]}

\newcommand{\Real}{\mathbb R}

\newcommand{\eps}{\varepsilon}
\newcommand{\too}{\rightarrow}




\definecolor{mygray}{gray}{0.95}


\usepackage{xspace}
\newcommand*{\eg}{{\it e.g.}\@\xspace}
\newcommand*{\ie}{{\it i.e.}\@\xspace}


\usepackage{amsmath,amsfonts,bm}









\def\eqref#1{equation~\ref{#1}}









\def\1{\bm{1}}

\def\eps{{\epsilon}}










\DeclareMathAlphabet{\mathsfit}{\encodingdefault}{\sfdefault}{m}{sl}
\SetMathAlphabet{\mathsfit}{bold}{\encodingdefault}{\sfdefault}{bx}{n}


\def\gD{{\mathcal{D}}}

\def\gL{{\mathcal{L}}}

\def\gN{{\mathcal{N}}}










\newcommand{\E}{\mathbb{E}}

\newcommand{\R}{\mathbb{R}}



\usepackage{wrapfig}
\usepackage{tabularx,ragged2e,booktabs,caption}
\newcolumntype{C}[1]{>{\Centering}m{#1}}

\newcolumntype{Z}[1]{>{\Left}m{#1}}

\usepackage{lipsum}
\usepackage{algorithm}
\usepackage{algpseudocode}
\usepackage{thmtools} 
\usepackage{thm-restate}
\usepackage{comment}
\usepackage{wrapfig}
\usepackage[super]{nth}
\usepackage{xcolor,colortbl}
\newcommand\cincludegraphics[2][]{\raisebox{-0.4\height}{\includegraphics[#1]{#2}}}


\begin{document}

\twocolumn[
\icmltitle{Bespoke Non-Stationary Solvers \\ for Fast Sampling of Diffusion and Flow Models}



\icmlsetsymbol{equal}{*}

\begin{icmlauthorlist}
\icmlauthor{Neta Shaul}{wis}
\icmlauthor{Uriel Singer}{metag}
\icmlauthor{Ricky T. Q. Chen}{metaf}
\icmlauthor{Matthew Le}{metaf}
\icmlauthor{Ali Thabet}{metag}
\icmlauthor{Albert Pumarola}{metag}
\icmlauthor{Yaron Lipman}{metaf,wis}
\end{icmlauthorlist}

\icmlaffiliation{wis}{Weizmann Institute of Science}
\icmlaffiliation{metag}{GenAI, Meta}
\icmlaffiliation{metaf}{FAIR, Meta}

\icmlcorrespondingauthor{Neta Shaul}{Neta.Shaul@weizmann.ac.il}



\vskip 0.3in
]



\printAffiliationsAndNotice{} 

\begin{abstract}
 This paper introduces Bespoke Non-Stationary (BNS) Solvers, a solver distillation approach to improve sample efficiency of Diffusion and Flow models. BNS solvers are based on a family of non-stationary solvers that provably subsumes existing numerical ODE solvers and consequently demonstrate considerable improvement in sample approximation (PSNR) over these baselines. Compared to model distillation, BNS solvers benefit from a tiny parameter space ($<$200 parameters), fast optimization (two orders of magnitude faster), maintain diversity of samples, and in contrast to previous solver distillation approaches nearly close the gap from standard distillation methods such as Progressive Distillation in the low-medium NFE regime. For example, BNS solver achieves 45 PSNR / 1.76 FID using 16 NFE in class-conditional ImageNet-64. We experimented with BNS solvers for conditional image generation, text-to-image generation, and text-2-audio generation showing significant improvement in sample approximation (PSNR) in all. 
 
\end{abstract}

\begin{figure*}[t]
    \centering
    \begin{tabular}{@{\hspace{0pt}}c@{\hspace{5pt}}c@{\hspace{5pt}}c@{\hspace{5pt}}c@{\hspace{0pt}}}
    {\ \scriptsize GT (Adaptive RK45) $\approx 160$ NFE} & {\scriptsize Bespoke Non-Stationary: 16 NFE} & {\ \scriptsize RK-Midpoint: 16 NFE}  & {\ \scriptsize RK-Euler: 16 NFE} \\
    \includegraphics[width=0.22\textwidth]{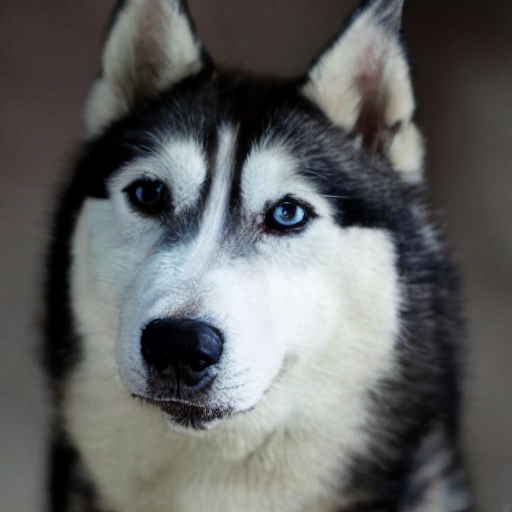}  & \includegraphics[width=0.22\textwidth]{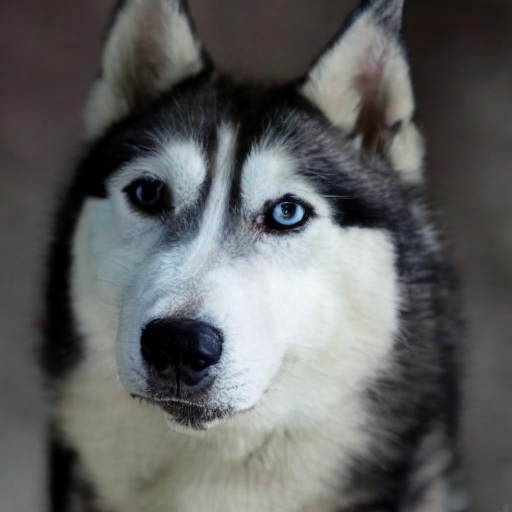} & \includegraphics[width=0.22\textwidth]{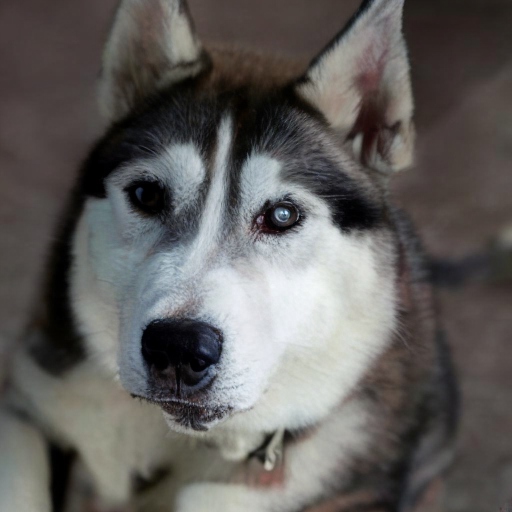} &
     \includegraphics[width=0.22\textwidth]{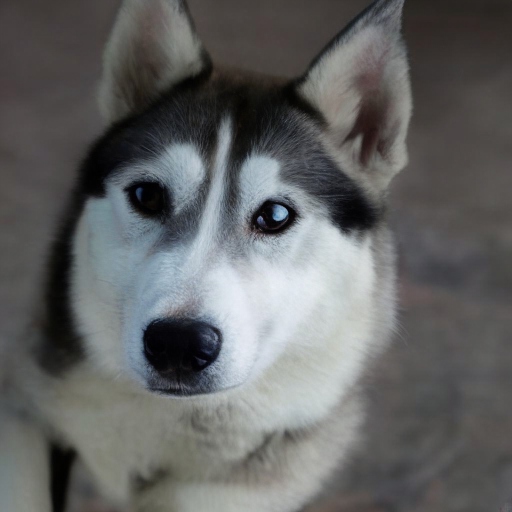}\\     
    \includegraphics[width=0.22\textwidth]{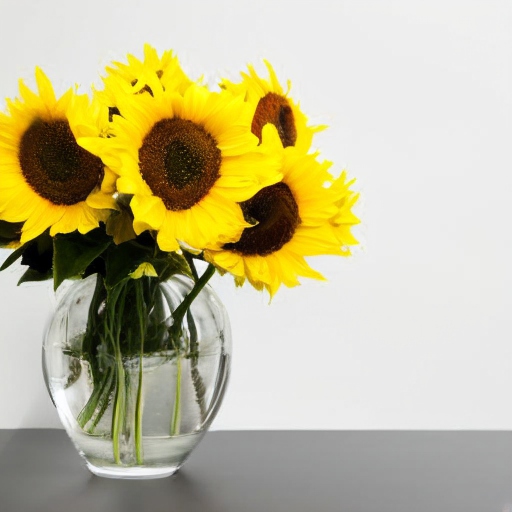} & \includegraphics[width=0.22\textwidth]{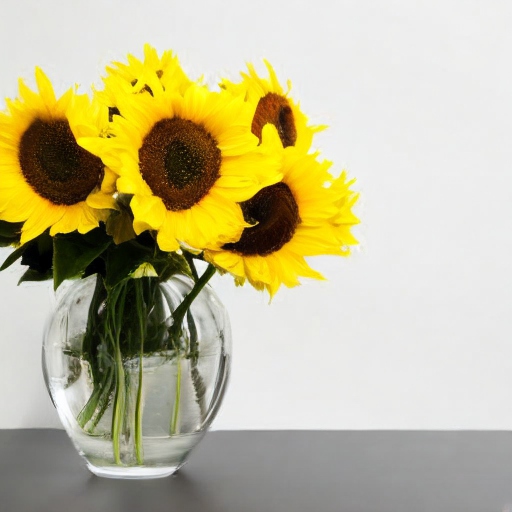} & \includegraphics[width=0.22\textwidth]{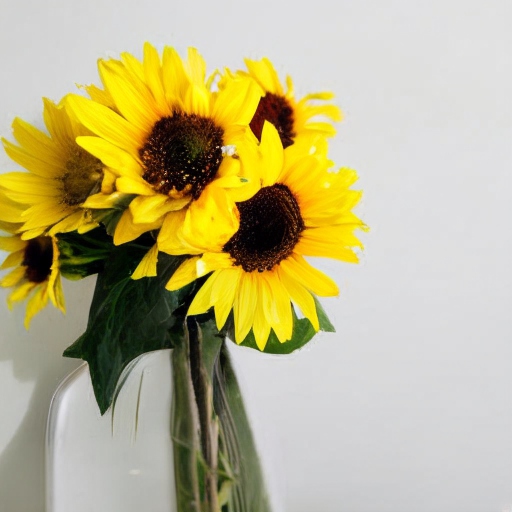} &
    \includegraphics[width=0.22\textwidth]{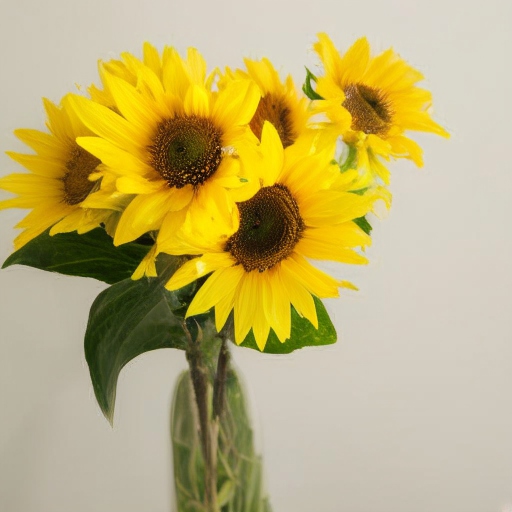}\\
    \includegraphics[width=0.22\textwidth]{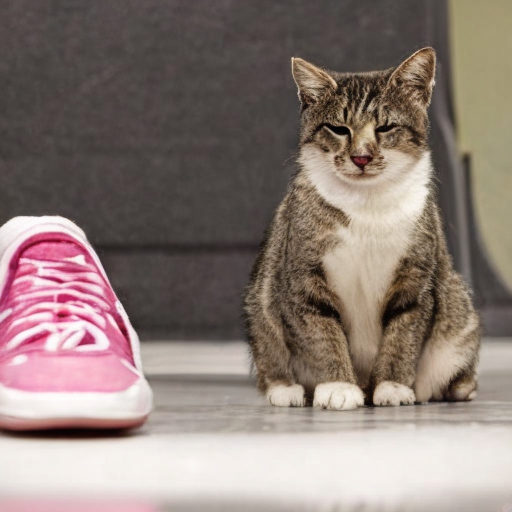} & \includegraphics[width=0.22\textwidth]{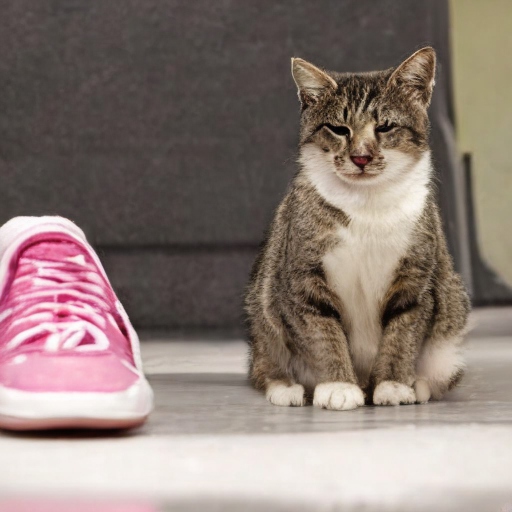} & \includegraphics[width=0.22\textwidth]{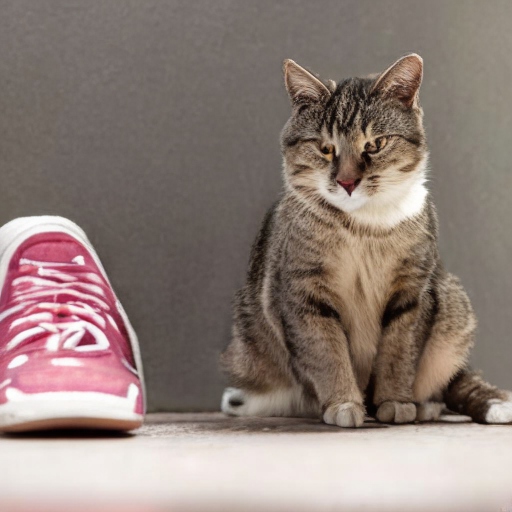} &
    \includegraphics[width=0.22\textwidth]{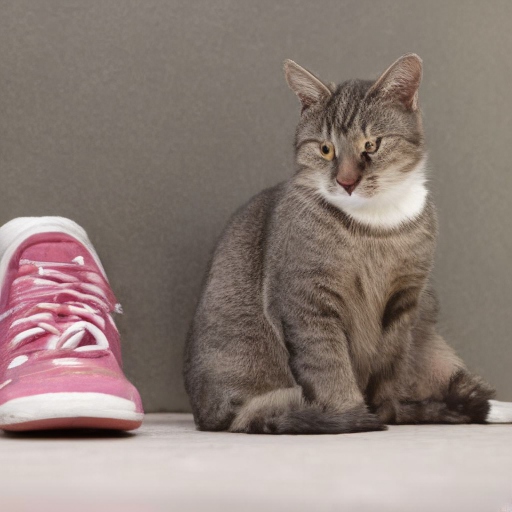}
\end{tabular}    
     \caption{Different solvers on an FM-OT 512$\times$512 Text-to-Image model with guidance scale $2$ initiated with the same noise (from left to right): Ground truth (Adaptive RK45), BNS 16 NFE (this paper), RK-Midpoint 16 NFE, and RK-Euler 16 NFE. Note the fidelity of BNS compared to GT. The different rows correspond to the captions (top to bottom): \textit{"A husky facing the camera.", "sunflowers in a clear glass vase on a desk.", "the cat is sitting on the floor beside a pair of tennis shoes."}.
     } 
    \label{fig:teaser}
\end{figure*}

\section{Introduction}
\label{Introduction}
Diffusion and flow-based methods are now established as a leading paradigm for generative models of high dimensional signals including images~\cite{rombach2021highresolution}, videos~\cite{singer2022makeavideo} audio~\cite{vyas2023audiobox}, 3D geometry~\cite{yariv2023mosaicsdf}, and physical structures such as molecules and proteins~\cite{hoogeboom2022equivariant}. While having an efficient training algorithms, sampling is a costly process that still requires tens to hundreds of sequential function evaluations to produce a sample.

\pagebreak
Sample efficiency of generative models is crucial to enable certain applications, \eg, ones that require interaction with a user, as well as to reduce carbon footprint and costs of these, now vastly popular models. An ongoing research effort targets reducing sampling complexity of diffusion/flow models, concentrating on three main venues: (i) \emph{dedicated solvers}: employing high-order numerical ODE solvers \cite{karras2022elucidating} and/or time and scale reparameterizations to further simplify sample trajectories \cite{zhang2022fast}; (ii) \emph{model distillation}: fine-tuning the model to approximate the original model's samples or the training data with less function evaluations \cite{salimans2022progressive, meng2023distillation,liu2022flow}. Recently also perception and GAN discriminator losses have been incorporated to improve perception quality~\cite{yin2023onestep}; (iii) \emph{solver distillation}: a rather recent and unexplored approach that limits distillation to \emph{optimizing a numerical solver} that effectively sample the original model (that is kept frozen). While model distillation is generally able to reduce the number of function evaluations (NFEs) for producing samples with high perceptual scores, it can reduce the diversity of the model and shift the generated distribution. Furthermore, it is still costly to train (\ie, conceptually continues training the original model), and mostly requires access to the original training data. In contrast, solver distillation enjoys a tiny parameters space (\ie, $<$200 of parameters), very fast optimization compared to model distillation (\ie, by two order of magnitude), and does not require access to training data~\cite{shaul2023bespoke}. 

The main goal of this paper is to introduce Bespoke Non-Stationary (BNS) solvers, a solver distillation approach that provably subsumes all previous dedicated and distillation solvers (that we are aware of). While BNS solver family enjoys higher expressive power, it still inherits other model distillation properties such as tiny parameter space and fast training. Its higher expressive power demonstrates a considerable improvement in approximating the original model's samples (PSNR) for lower NFEs, and is able to nearly close the gap with standard model distillation approaches such as Progressive Distillation~\cite{salimans2022progressive} in terms of perception quality (FID) for low-medium NFE range (\ie, 8-16). We have experimented with BNS solvers for conditional image generation, Text-to-Image (T2I) generation, and Text-to-Audio generation. In all cases BNS solvers considerably improved PSNR of generated samples. Figure \ref{fig:teaser} depicts BNS sampling from a large scale T2I model using 16 NFE producing consistent samples to the Ground Truth (GT) samples, while baselines fail to achieve this consistency. A secondary goal of this paper is to provide a full taxonomy of popular numerical solvers used to sample diffusion and flow models, as well as present them in a consistent way that highlights their relations. 

We summarize the paper's contributions:
\begin{enumerate}[(i)]
    \item Introduce BNS solvers; subsumes existing  solvers.
    \item A simple and effective BNS optimization algorithm.
    \item Significantly improving sample approximation (PSNR) over existing solvers, and reducing the gap in perception (FID) from model distillation techniques. 
    \item Provide a full taxonomy of numerical solver used for sampling diffusion and flow models. 
\end{enumerate}

\section{Preliminaries}\label{s:prelims}
\paragraph{Flow-based generative models.}
We let $x\in \Real^d$ represent a signal, \eg, an image in pixel or latent space. Deterministic sampling of a diffusion or flow model is done by solving an Ordinary Differential Equation (ODE), 
\begin{equation}\label{e:ode}
    \dot{x}(t) = u_t(x(t)),
\end{equation}
where $x(t)$ is called a \emph{sample trajectory} initialized with $x(0)=x_0$, where $x_0\sim p_0(x_0)$ is a sample form the source distribution $p_0$ usually representing noise, and the ODE is solved until time $t=1$. The Velocity Field (VF) $u:[0,1]\times\Real^d\too \Real^d$ is defined using the provided diffusion/flow model, and is detailed below for popular model parametrizations. Note that we use the convention of ODE going forward in time with $t=0$ corresponding to source/noise and $t=1$ to data. 

\paragraph{Diffusion and Flow-Matching models.} There are three common model parametrizations used in Diffusion/Flow-Matching: (i) \emph{(Diffusion)} $\eps$-prediction~\cite{ho2020denoising}, (ii) \emph{(Diffusion)} the $x$-prediction~\cite{salimans2022progressive}, and (iii) \emph{(Flow-Matching)} velocity field (VF) prediction \cite{lipman2022flow}. We use the common notation $f:[0,1]\times\Real^d\too\Real^d$ to denote all three. The model $f$ is commonly trained on a predefined time dependent probability density path $p_t$, 
\begin{equation}\label{e:pt}
    p_t(x) = \int p_t(x|x_1)q(x_1)dx_1,
\end{equation}
where $q(x_1)$ denotes the data distribution, and the \emph{conditional probability path} $p_t(x|x_1)$ is often chosen to be a \emph{Gaussian path}, that is defined by a Gaussian kernel,
\begin{equation}\label{e:def_gaussian_path}
    p_t(x|x_1) = \gN(x|\alpha_t x_1, \sigma_t^2I),
\end{equation}
and the pair of time dependent functions $\alpha,\sigma:[0,1]\too[0,1]$ are called a \emph{scheduler} and satisfy 
\begin{equation}
    \alpha_0=0=\sigma_1,\ \  \alpha_1=1,\ \ \sigma_0>0.
\end{equation}
%
\begin{wraptable}[8]{r}{0.20\textwidth}
\vspace{-10pt}
\hspace{-15pt}
\setlength{\tabcolsep}{2.0pt}
\centering
\resizebox{0.2\textwidth}{!}{
    \begin{tabular}{lcc} 
      \toprule
       $f_t$ &  $\beta_t$ & $\gamma_t$ \\ \midrule 
      \textbf{Velocity} & $0$ & $1$ \\ \midrule
      $\boldsymbol{\eps}$\textbf{-pred} & $\frac{\dot{\alpha}_t}{\alpha_t}$ & $\frac{\dot{\sigma}_t\alpha_t - \sigma_t\dot{\alpha}_t}{\alpha_t}$\\ \midrule
      $\boldsymbol{x}$\textbf{-pred} & $\frac{\dot{\sigma}_t}{\sigma_t}$ & $\frac{\sigma_t\dot{\alpha}_t - \dot{\sigma}_t\alpha_t}{\sigma_t}$ \\
     \bottomrule 
    \end{tabular}  
} \vspace{-10pt}
\caption{Velocities \\ of common models.}\label{tab:velocity_field_coefficients}
\end{wraptable}
All schedulers discussed in this paper (and those practically used in the literature) have strictly monotonically increasing Signal-to-Noise (SnR) ratio, defined by $\mathrm{snr}(t)=\alpha_t/\sigma_t$. For Gaussian paths, the velocity field $u_t$ (used for sampling, \eqref{e:ode}) takes the form
\begin{equation}\label{e:u_gaussian_path}
    u_t(x) = \beta_t x + \gamma_t f_t(x),
\end{equation}
where the coefficients $\beta_t,\gamma_t$ are given in Table \ref{tab:velocity_field_coefficients}. 
\paragraph{ST transformations and post-training scheduler change.} A Scale-Time (ST) transformation~\cite{shaul2023bespoke} transforms sample trajectories $x(t)$ according to the formula
\begin{equation}\label{e:bar_x}
    \bar{x}(r) = s_r x(t_r),
\end{equation}
where $t:[0,1]\too[0,1]$ and $s:[0,1]\too\R_{>0}$ are a time and scale reparameterization functions satisfying $t_0=0$, $t_1=1$ and $s_0,s_1>0$. These conditions in particular imply that $\bar{x}(1)=s_1x(1)$, that is, we can recover the original sample $x(1)$ from the transformed path's sample $\bar{x}(1)$ via $x(1)=s_1^{-1}\bar{x}(1)$. Consequently, ST transformations can potentially simplify the sample trajectories for approximation while still allowing to recover the model's original samples. The transformed VF $\bar{u}$ that generates the ST-transformed paths $\bar{x}(r)$ is 
\begin{equation}\label{e:bar_u}
    \bar{u}_r(x) = \frac{\dot{s}_r}{s_r}x + \dot{t}_r s_r u_{t_r}\parr{\frac{x}{s_r}}.
\end{equation}
Particular instances of this formula are also derived and/or discussed in \cite{karras2022elucidating,zhang2022fast,pokle2023training,kingma2021variational}. For strictly monotone SnR, Scale-Time transformations $(s_r,t_r)$ are in a 1-1 correspondence with a scheduler change $(\alpha_t,\sigma_t)\too (\bar{\alpha}_t,\bar{\sigma}_t)$ in the Gaussian probability path (\eqref{e:pt}), and the conversion between the two can be done using the following formulas \cite{shaul2023bespoke}:
\begin{equation}\label{e:conversion}
    \left . \begin{aligned}
    \bar{\alpha}_r &= s_r \alpha_{t_r} \\\bar{\sigma}_r &= s_r \sigma_{t_r}
    \end{aligned} \right \} \iff
    \left \{\begin{aligned}
    t_r &= \mathrm{snr}^{-1}(\overline{\mathrm{snr}}(r)) \\ s_r &= {\bar{\sigma}_r}/{\sigma_{t_r}}
    \end{aligned}\right. .
\end{equation}
In particular, given a VF $u$ trained with a Gaussian path defined by a scheduler $(\alpha_t,\sigma_t)$, moving to a different scheduler post-training can be done by first computing the ST transformation $(s_r,t_r)$ from \eqref{e:conversion} and then using $\bar{u}$ in \eqref{e:bar_u} and sample with \eqref{e:ode}.


\section{Bespoke Non-Stationary Solvers}

In this section we introduce and analyze the main object of this paper: \emph{Bespoke Non-Stationary} (BNS) solvers. We start with introducing the \emph{Non-Stationary} (NS) solvers family followed by developing an algorithmic framework to search within this family a particular solver suitable to sample a provided pre-trained diffusion or flow model. We call such a solver BNS solver. We conclude this section with a theoretical analysis providing a complete taxonomy for popular ODE solvers used for diffusion/flows sampling and proving that NS solvers subsumes them all, see Figure \ref{fig:ven-diagram}. 

\begin{wrapfigure}[7]{r}{0.4\columnwidth} \vspace{-45pt}
  \begin{center}
    \includegraphics[width=0.4\columnwidth]{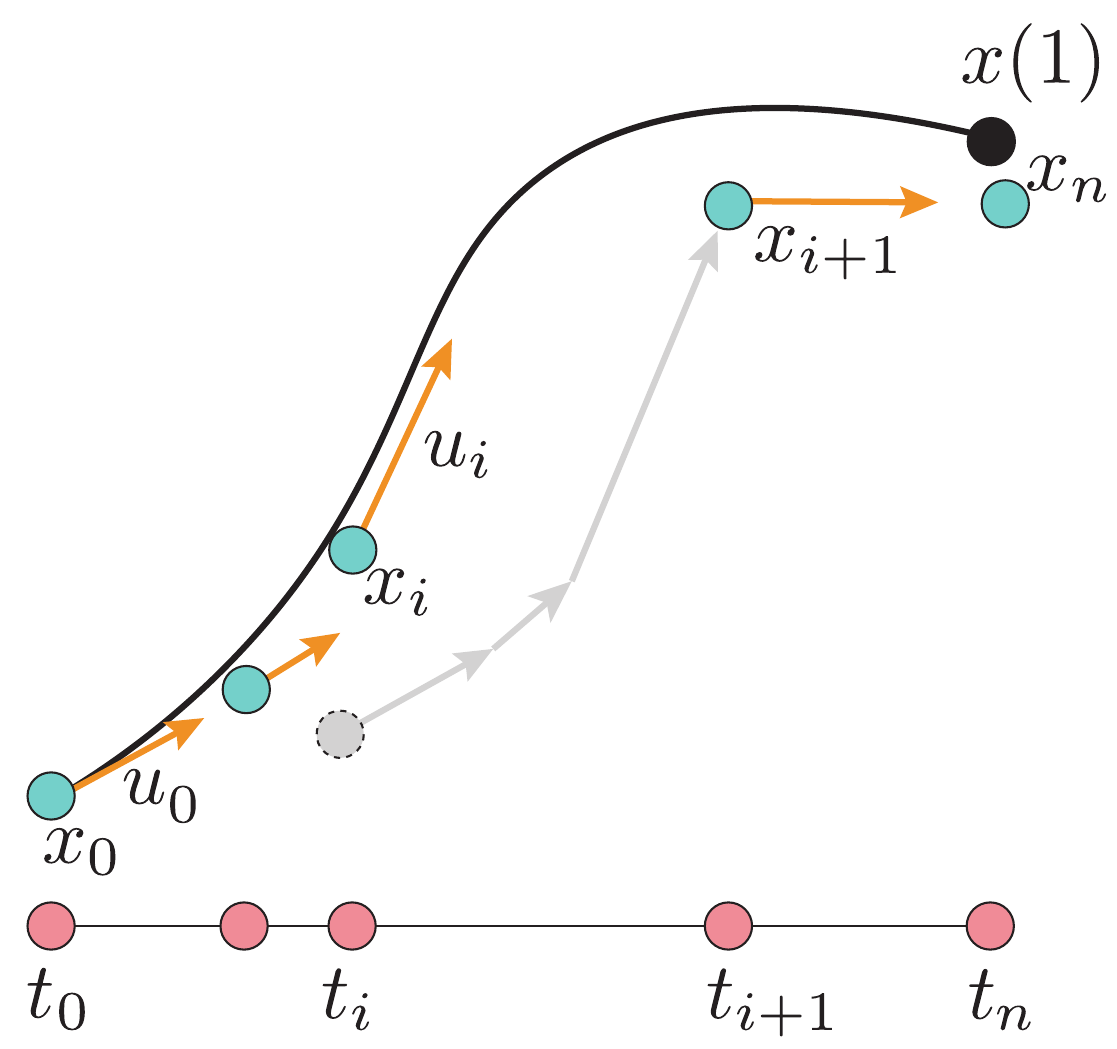}
  \end{center}\vspace{-10pt}
  \caption{Setup.}\label{fig:setup}
\end{wrapfigure}
\subsection{Non-Stationary Solvers}
In practice, \eqref{e:ode} is solved using a numerical ODE solver. We consider a broad family of ODE solvers - the \emph{Non-Stationary} (NS) Solvers. An $n$-step NS solver is defined by a pair: (i) a time-step discretization, and (ii) a set of $n$ update rules. The time-step discretization is a monotonically increasing sequence,
\begin{equation}\label{e:def_time_disc}
    T_n = \big(t_0, t_1, \ldots, t_{n-1}, t_n\big ), 
\end{equation}
always starts at $t_0=0$ and ends with $t_n=1$. The $i$-th update rule, where $i=0,\ldots,n-1$, has the form
\begin{equation}\label{e:def_nsms}
    x_{i+1} =  X_ic_i + U_id_i,
\end{equation}
where the matrix $X_i\in \R^{d \times (i+1)}$ stores all previous approximated points on the sample trajectory until and including time $t_i$, and the matrix $U_i \in \R^{d \times (i+1)}$ stores all velocity vectors evaluated at those previous samples, 
\begin{equation*}\label{e:def_x_matrix}
    X_i = \brac{\begin{array}{cccc}
        \vertbar & \vertbar & & \vertbar\\
        x_{0}&  x_{1} & \cdots & x_{i}\\
        \vertbar & \vertbar & & \vertbar
    \end{array}}, U_i = \begin{bmatrix}
    \vertbar & \vertbar & & \vertbar\\        
        u_0& u_1 & \cdots & u_i\\
        \vertbar & \vertbar & & \vertbar        
    \end{bmatrix},
\end{equation*}
where we denote $u_j=u_{t_j}(x_j)$, and the vectors $c_i, d_i \in \R^{i+1}$ are the parameters of the $i$-th step, see Figure \ref{fig:setup}. The $i$-th step outputs $x_{i+1}$ that approximates the GT sample at the same time, \ie, $x(t_{i+1})$. The NS solver can utilize an arbitrary linear combination of all previous points on the trajectory and their corresponding velocities.

\begin{figure}
    \centering
    \includegraphics[width=\columnwidth]{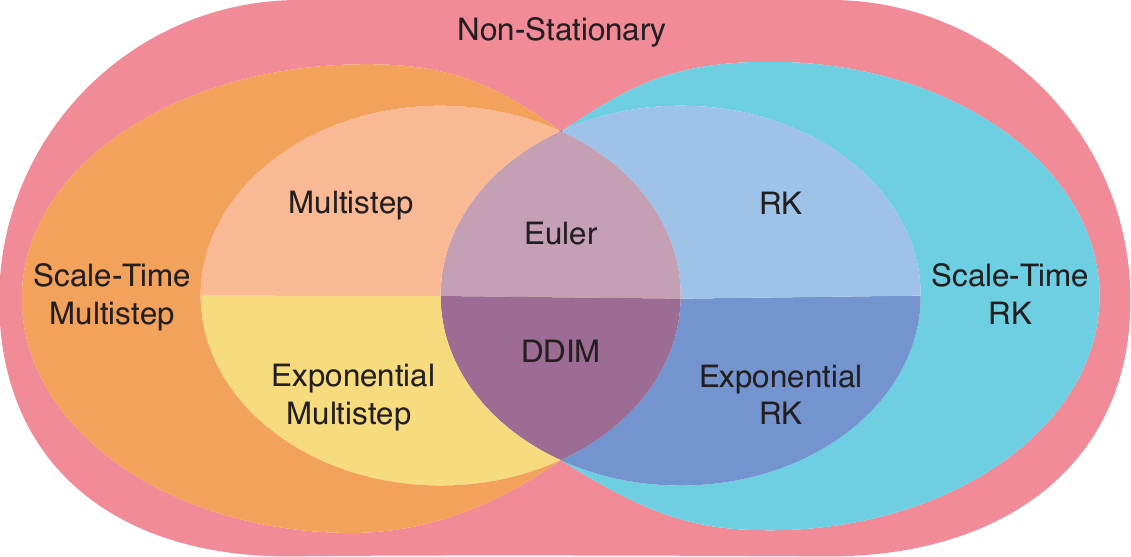}
    \caption{Taxonomy of ODE solvers used for sampling of diffusion/flow generative models.}
    \label{fig:ven-diagram}
\end{figure}

\subsection{Optimizing BNS Solvers}
Our goal is to find a member in the NS family of solvers that provides a good sampler for a \emph{specific} diffusion or flow model $u$. We call such a model-specific solver \emph{Bespoke Non-Stationary} (BNS) solver. In order to find an efficient BNS solver we require: (i) a parameterization $\theta\in\Real^p$ of NS solvers family; (ii) a cost function, $\gL(\theta)$, quantifying the effectiveness of different NS solver candidates in sampling $u$; and (iii) an initialization $\theta=\theta_0$ for optimizing the cost function. We detail these next.

      \begin{algorithm}[H]
        \caption{Non-Stationary sampling.}\label{alg:nsms_sampling}
        \begin{algorithmic}
            \Require NS solver $\theta$, model $u$, initial noise $x_0$
            \State $U_{-1} \gets \brac{\ }$\Comment{{\color{cyan}empty matrix initialization}}
            \For{$i=0,1,\ldots,n-1$}
            \State $U_i \gets \begin{bmatrix}U_{i-1} & u_{t_i^{\theta}}(x_{i})\end{bmatrix}$
            \State $x_{i+1} \gets x_0a_i^{\theta} + U_ib_i^{\theta}$
        \EndFor
        \State \Return $x_n^{\theta}$
        \end{algorithmic}
      \end{algorithm}

\paragraph{NS solvers parameterization.} The naive representation of an NS solver following the above introduction would be to collect the time discretization vector $T_n$ and all pairs $(c_i,d_i)$, $i=0,\ldots,n-1$,  defining the update rules in \eqref{e:def_nsms}. Although doable this would provide an \emph{over-parameterized} representation, meaning an NS solver can be represented in more than a single way. The following proposition provides a generically unique representation: \pagebreak
\begin{restatable}{proposition}{parameterization}\label{prop:ns_param}
    For every update rule $(c_i,d_i)\in \Real^{i+1}\times\Real^{i+1} $ of an NS solvers there a exist a pair $(a_i, b_i)\in \R\times\R^{i+1}$ so that the update rule can be equivalently written as 
    \begin{equation}\label{e:unique_nsms}
        x_{i+1} = x_0a_i + U_ib_i.
    \end{equation}
Furthermore, if the columns of $U_i$ are linearly independent then the pair $(a_i, b_i)$ is unique.
\end{restatable}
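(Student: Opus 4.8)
The plan is to establish existence by induction on the step index $i$ and uniqueness by a short linear-algebra argument. The guiding observation is that the columns of $X_i$ are not free vectors: each $x_j$ with $j\ge 1$ is itself the output of an earlier update rule, so it already lies in the span of $x_0$ and the earlier velocities. This is exactly what lets us absorb the $X_ic_i$ term into an $x_0$-term plus a $U_i$-term.

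First, for existence I would prove the slightly stronger inductive claim that every trajectory point $x_j$ with $j\ge 1$ can be written as $x_j = x_0 a_{j-1} + U_{j-1}b_{j-1}$. The base case $i=0$ is immediate: since $X_0=[x_0]$ and $U_0=[u_0]$, the rule $x_1 = x_0 c_0 + u_0 d_0$ already has the claimed shape with scalars $a_0=c_0$, $b_0=d_0$. For the inductive step I expand $X_i c_i = \sum_{j=0}^i (c_i)_j\, x_j$, keep the $j=0$ term as $(c_i)_0 x_0$, and substitute the inductive representation $x_j = x_0 a_{j-1}+U_{j-1}b_{j-1}$ for each $j\ge 1$. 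The crucial bookkeeping step is that the columns of $U_{j-1}$, namely $u_0,\dots,u_{j-1}$, are precisely the first $j$ columns of $U_i$, so $U_{j-1}b_{j-1} = U_i \tilde b_{j-1}$ where $\tilde b_{j-1}\in\R^{i+1}$ is $b_{j-1}$ zero-padded. Collecting the coefficient of $x_0$ and the coefficient vector of $U_i$ then yields the explicit formulas
\[
a_i = (c_i)_0 + \sum_{j=1}^{i}(c_i)_j\, a_{j-1}, \qquad b_i = d_i + \sum_{j=1}^{i}(c_i)_j\, \tilde b_{j-1},
\]
which proves the existence half and, as a bonus, matches the recursion implicitly used in Algorithm \ref{alg:nsms_sampling}.

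For uniqueness, suppose $x_0 a_i + U_i b_i = x_0 a_i' + U_i b_i'$, i.e.\ $x_0(a_i-a_i') + U_i(b_i-b_i')=\zero$. If the vectors $x_0,u_0,\dots,u_i$ are linearly independent, then every coefficient must vanish, giving $a_i=a_i'$ and $b_i=b_i'$. The main subtlety I expect to flag here is that linear independence of the columns of $U_i$ \emph{alone} is not quite enough: one must additionally have $x_0\notin\mathrm{span}(u_0,\dots,u_i)$, because if $x_0 = U_i\lambda$ then $(a_i,b_i)\mapsto(a_i+\delta,\, b_i-\delta\lambda)$ leaves the output unchanged for any $\delta$. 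In the regime of interest ($d\ge i+2$) the augmented set $\{x_0,u_0,\dots,u_i\}$ is generically independent, which is what underlies the ``generically unique'' phrasing, so I would state the uniqueness conclusion under independence of this full augmented set rather than of $U_i$ by itself. Beyond this caveat, the argument is entirely routine linear algebra; the only real work is the inductive substitution and the zero-padding identity $U_{j-1}b_{j-1}=U_i\tilde b_{j-1}$.
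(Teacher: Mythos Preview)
Your proof is correct and follows essentially the same inductive approach as the paper: both expand $X_ic_i$, isolate the $j=0$ term, substitute the inductive representation $x_j=x_0a_{j-1}+U_{j-1}b_{j-1}$ for $j\ge1$, and regroup coefficients (the paper does the regrouping via an explicit change of summation order, you via the zero-padding identity $U_{j-1}b_{j-1}=U_i\tilde b_{j-1}$, which is cleaner). Your caveat on uniqueness is well taken and in fact matches the paper's own proof, which concludes uniqueness under linear independence of the full set $\{x_0,u_0,\dots,u_k\}$ rather than of the columns of $U_i$ alone.
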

The proposition is proved using induction in Appendix \ref{a:bns_optim}. We note that \cite{Duan_2023} shows a similar result for diffusion $\eps$-prediction vectors and the special case of $X_i\in\Real^{d}$ (instead of the more general $X_i\in\Real^{d\times (i+1)}$).  

With \eqref{e:unique_nsms} as the new NS update rules the complete set of parameters $\theta\in \Real^p$ representing an NS solvers is
\begin{equation}
    \theta = \brac{ T_n , (a_0,b_0), \ldots, (a_{n-1},b_{n-1})},
\end{equation}
where the number of parameters is $p=n\parr{\frac{n+5}{2}}+1$, which is the dimension of $n$-steps NS solvers. Algorithm \ref{alg:nsms_sampling} shows how to generate a sample with an NS solver.

\paragraph{Cost function.}
To find an effective NS solver $\theta_*$ we consider a set of pairs $(x_0,x(1))$, where $x_0\sim p_0(x_0)$ are source samples, and $x(1)$ are high accuracy approximate solutions of \eqref{e:ode} with $x(0)=x_0$ as initial conditions. Then, we optimize the PSNR loss,
\begin{equation}\label{e:loss}
    \gL(\theta) = -\E_{(x_0,x(1))} \log \norm{x_n^\theta - x(1)}^2,
\end{equation}
where $x_n^\theta$ is the output of Algorithm \ref{alg:nsms_sampling} initialized with $x_0$, the velocity field $u$, and $\theta$; we denote $\norm{x}^2=\frac{1}{d}\sum_{i=1}^dx_i^2$.

\paragraph{Initialization and preconditioning.}
The last remaining part of our method is initialization $\theta=\theta_0$ and preconditioning, which are related. To have an effective optimization of the loss and reach a good solution we would like to start from an already reasonable solver. For that end we simply take $\theta_0$ to coincide with a generic ODE numerical solver such that Euler (RK 1$^{\text{st}}$ order) or Midpoint (RK 2$^{\text{nd}}$ order), see definition in Appendix \ref{a:generic_solvers}. This is always possible since, as we show in Section \ref{s:expressive_power}, all generic solvers are particular instances of NS solvers. However, just providing a good initialization is not always enough for successful optimization as bad conditioning can lead to either diverging solutions or excruciating slow convergence \cite{nocedal1999numerical}. We found that in some cases, especially when using high Classifier Free Guidance (CFG) scale \cite{ho2022classifier} preconditioning the velocity field $u$ by first changing its original scheduler improves convergence of $\theta$ and reaches better solutions in general. In particular we denote by $\sigma_0>0$ a \emph{preconditioning hyperparameter} and change the velocity field $u$ to $\bar{u}$ according to the scheduler 
\begin{equation}\label{e:initialization_scheduler}
    \bar{\sigma}_t = \sigma_0 \sigma_{t}, \quad\bar{\alpha}_t=\alpha_t,
\end{equation}
which corresponds to changing the source distribution to be proportional to $p_0(\frac{x}{\sigma_0})$, \ie, larger standard deviation. This is done using equations \ref{e:bar_u},\ref{e:conversion} as described in detail in Section \ref{s:prelims}. The BNS optimization is provided in Algorithm \ref{alg:nsms_training}.

      \begin{algorithm}[H]
        \caption{Bespoke Non-Stationary solver training.}\label{alg:nsms_training}
        \begin{algorithmic}
            \Require model $u$, pairs $\gD=\set{(x_0,x(1))}$, $n$, $\theta_0$
            \State initialize $\theta \gets \theta_0$ 
            \While{not converged}
                \For{$(x_0, x(1))\in \gD$}
                \State $x_n^{\theta}\gets \texttt{NS\_sampling}(x_0,u,\theta)$ \Comment{{\color{cyan}Alg.~\ref{alg:nsms_sampling}}}
                \State $\theta \gets \theta - \gamma \nabla_\theta\gL(\theta)$  \Comment{{\color{cyan}optimization step, eq.~\ref{e:loss}}}
                \EndFor
            \EndWhile
        \State \Return $\theta$
        \end{algorithmic}
      \end{algorithm}

\begin{figure*}[t]
    \centering
    \begin{tabular}{@{\hspace{0pt}}c@{\hspace{0pt}}c@{\hspace{0pt}}c@{\hspace{0pt}}c@{\hspace{0pt}}}
    {\quad \ \scriptsize ImageNet-64: $\eps$-pred} & {\quad \ \scriptsize ImageNet-64: FM/$v$-CS} & {\quad \ \scriptsize ImageNet-64: FM-OT}  & {\quad \ \scriptsize ImageNet-128: FM-OT} \\
    \includegraphics[width=0.25\textwidth]{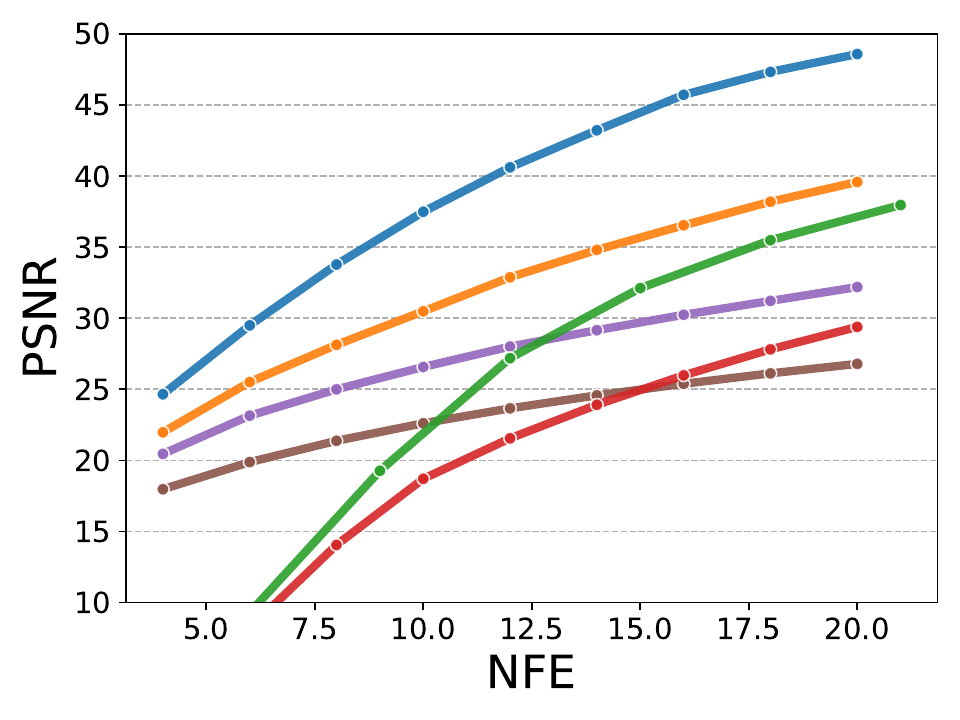}  & \includegraphics[width=0.25\textwidth]{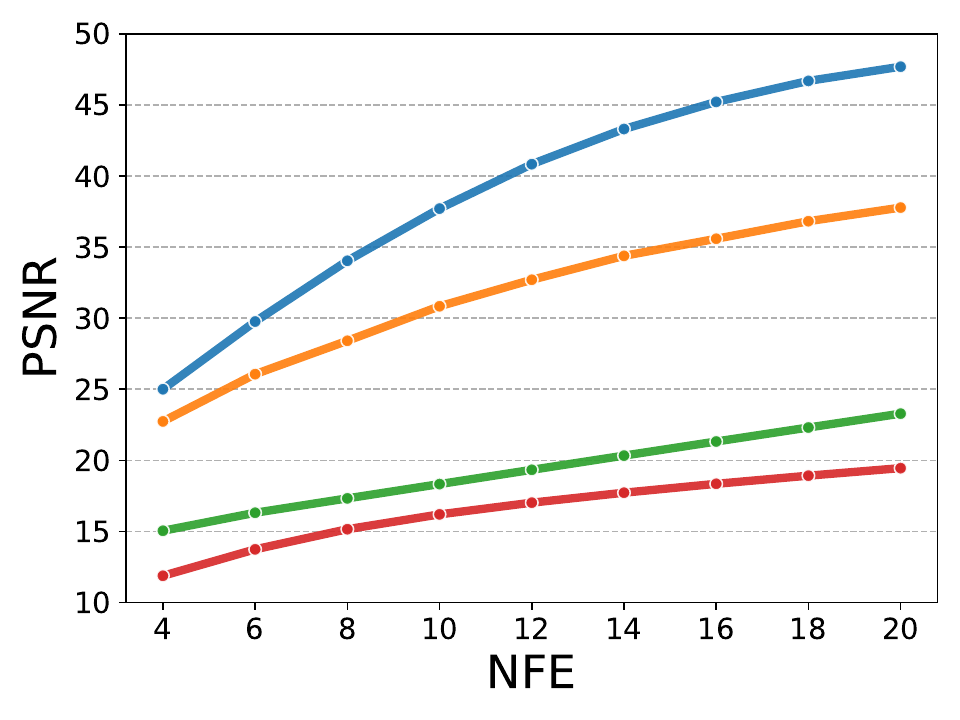} & \includegraphics[width=0.25\textwidth]{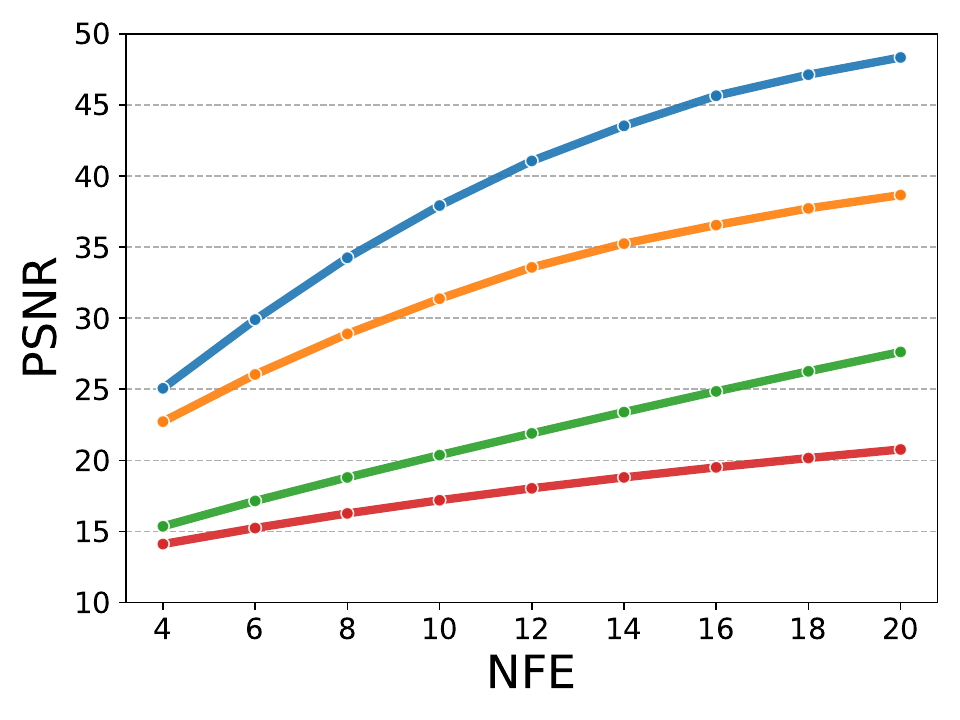} &
     \includegraphics[width=0.25\textwidth]{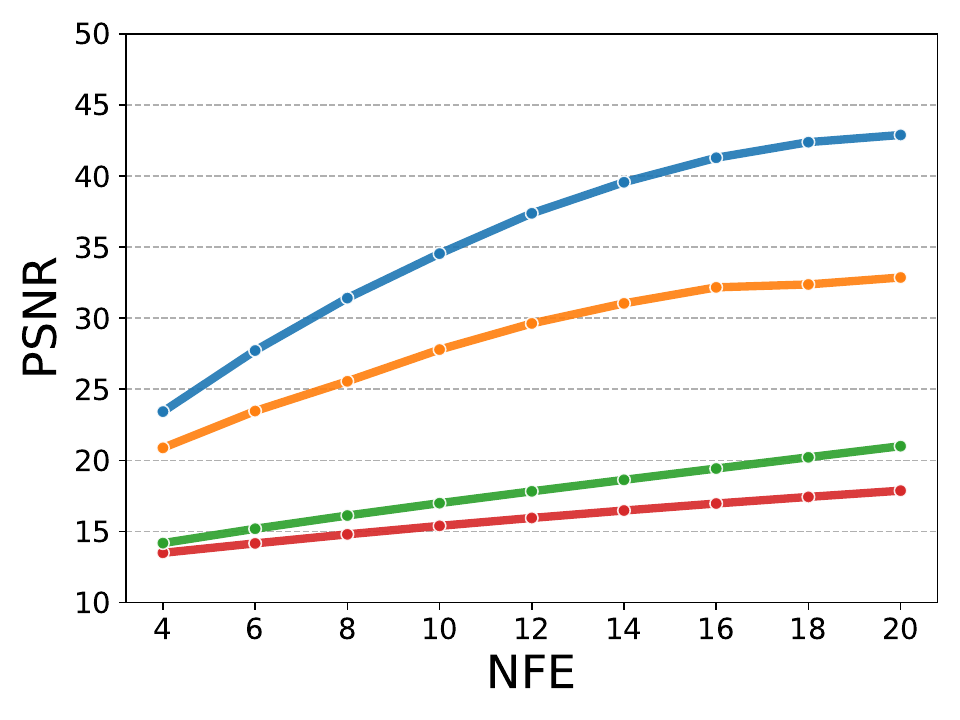}\\
    \includegraphics[width=0.25\textwidth]{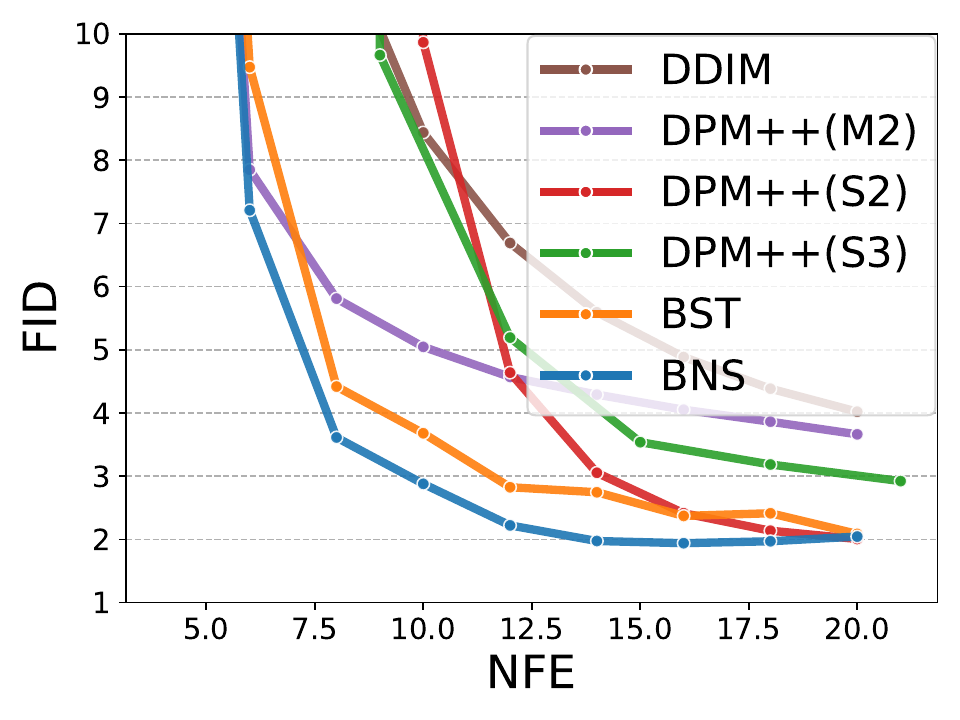} & \includegraphics[width=0.25\textwidth]{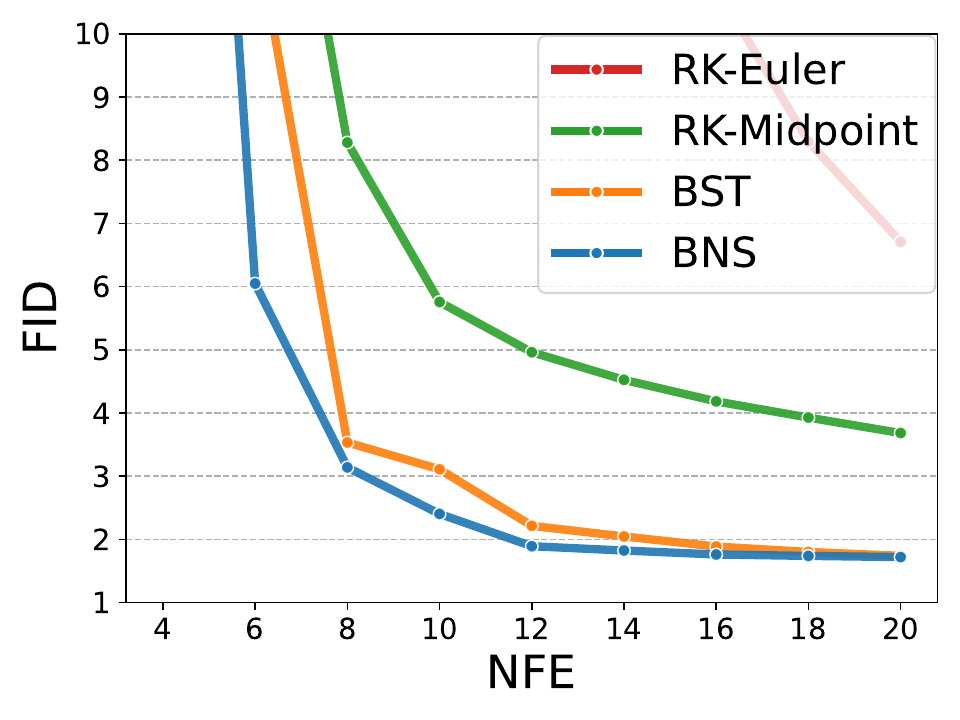} & \includegraphics[width=0.25\textwidth]{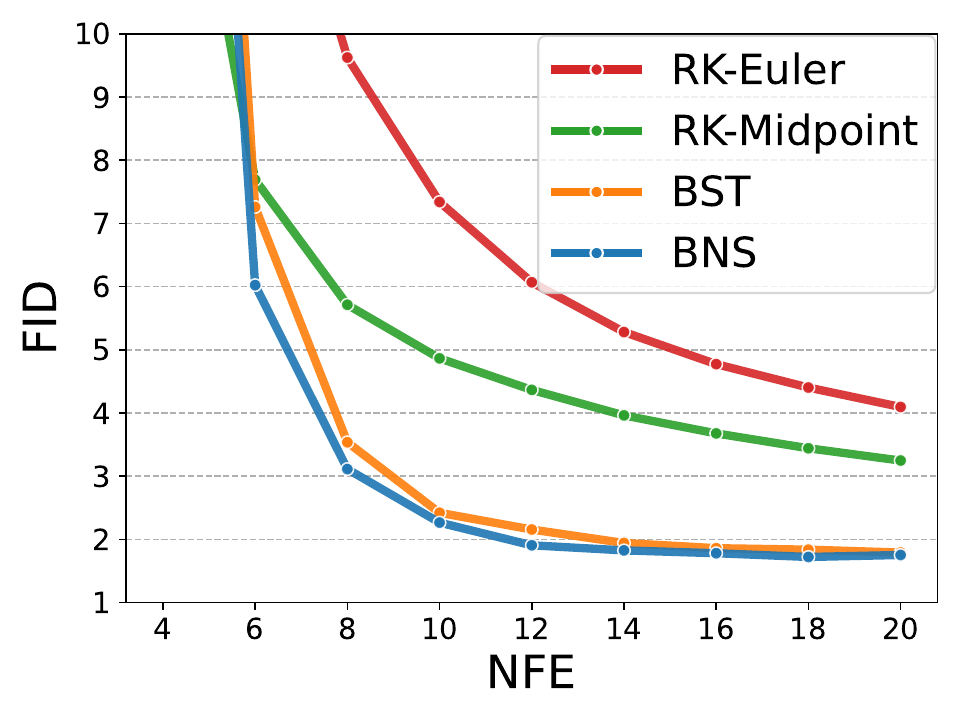} &
    \includegraphics[width=0.25\textwidth]{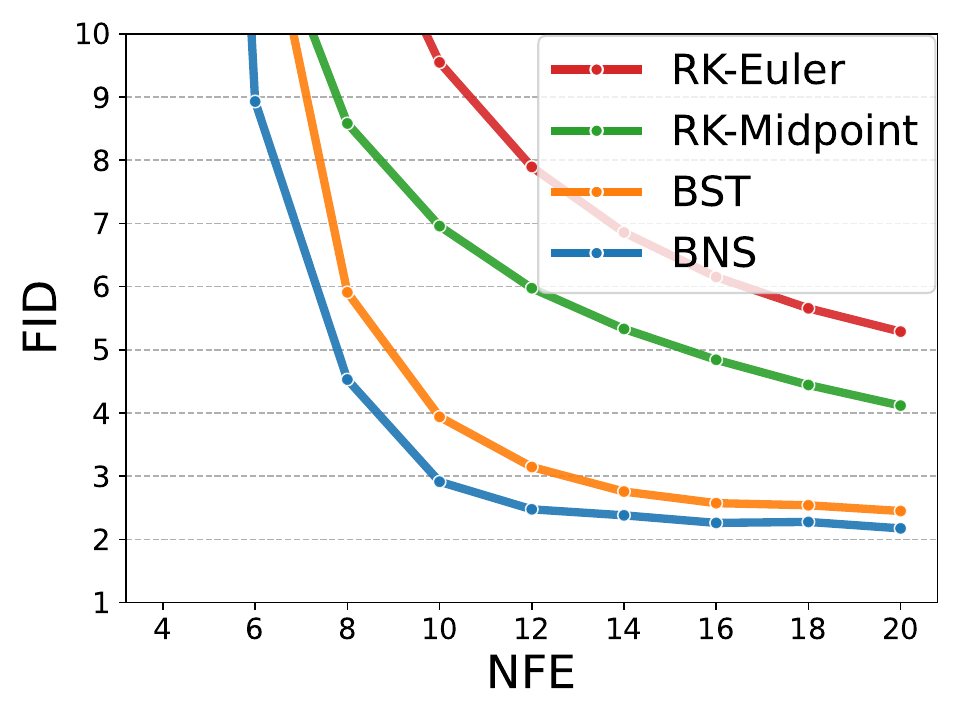}
\end{tabular}    
     \caption{
     BNS solvers vs.~BST solvers, RK-Midpoint/Euler, DDIM, DDIM, and DPM++ on ImageNet-64, and Image-Net128: PSNR vs.~NFE (top row), and FID vs.~NFE (bottom row).
     } 
    \label{fig:graph_imagenet_main}
\end{figure*}

\subsection{Expressive power of Non-Stationary Solvers}
\label{s:expressive_power}
In this section we start by reviewing \emph{generic solvers}, move to \emph{dedicated solvers}, explained in a unified way with the aid of the ST transformation tool, and conclude with a full solver taxonomy theorem. In particular, this theorem shows that Non-Stationary solvers subsumes all other solvers. The solver taxonomy is illustrated in Figure \ref{fig:ven-diagram}. 




\subsubsection{Generic solvers}\label{sss:generic_solvers}
Generic solvers build a series of approximations, $x_0,\ldots,x_i,x_{i+1},\ldots,x_n$, to the solution of the ODE in \eqref{e:ode} by iteratively applying an \emph{update rule}. The common update rules are derived from the following formula describing the solution to the ODE at time $t_{i+1}$ based on a known solution at time $t_i$, \ie,
\begin{equation}\label{e:exact_sol}
    x(t_{i+1}) = x(t_{i}) + \int_{t_{i}}^{t_{i+1}}u_t(x(t))dt.
\end{equation}
Generic solvers numerically approximate the integral by using, \eg, a polynomial approximation of $u_t(x(t))$ in the interval $[t_i, t_{i+1}]$, resulting in a \emph{stationary} (\ie, independent of $i$) update rule: \emph{Adam-Bashforth and Multistep methods} build the approximation based on \emph{past} times $t_{i-m+j}$, $j=1\ldots,m$, while \emph{Runge-Kutta methods} approximate it by \emph{future} times inside the interval $[t_i,t_{i+1}]$. Appendix \ref{a:generic_solvers} provides detailed formulas of both families, where a more elaborate exposition can be found in~\cite{iserles2009first}. 

\subsubsection{Dedicated solvers}
In this section we cover dedicated solvers developed specifically for diffusion and flow models, taking advantage of the particular structure of the Gaussian probability path (\eqref{e:pt}) and VF form (\eqref{e:u_gaussian_path}). Interestingly, all can be explained using scheduler change / ST transformations (detailed in Section \ref{s:prelims}) and application of a generic solvers.


\pagebreak
\textbf{EDM~\cite{karras2022elucidating}} change the original model's scheduler $(\alpha_t,\sigma_t)$ to 
\begin{equation}\label{e:edm_scheduler}
    \bar{\alpha}_r = 1,\quad \bar{\sigma}_r = \sigma_{\text{max}}(1-r),
\end{equation}
where $\sigma_{\text{max}}=80$. This scheduler transforms the original conditional paths to $p_t(x|x_1)=\gN(x|x_1,\sigma_{\max}(1-r))$ so that at time $r=0$, assuming $x_1$ has zero mean and std $\ll\sigma_{\max}=80$, the probability path approximates the Gaussian 
\begin{equation}\label{e:gaussian_max}
 p_0\approx \gN(0,\sigma_{\text{max}}^2I).   
\end{equation}
This scheduler is often called Variance Exploding (VE) due to the large noise std. Note that $\sigma_{\max}$ has to be sufficiently large for \eqref{e:gaussian_max} to hold. In contrast, our initialization utilizes a target scheduler that at time $r=0$ reaches arbitrary desired std (the hyperparameter $\sigma_0$) with no bias, \ie, $p_0=\gN(0,\sigma_0^2 I)$. In practice we find that setting $\sigma_0$ too high hurts performance. Lastly, EDM incorporate a particular time discretization on top of this scheduler change, potentially to compensate for the high $\sigma_{\max}$.  


\textbf{Bespoke Scale-Time solvers~\cite{shaul2023bespoke}} suggest to search among the ST transformations for a particular instance that facilitates sampling a specific model. In more detail, applying an ST transformation to \eqref{e:exact_sol} provides  
\begin{equation}\label{e:st_exact_sol}
    \bar{x}(r_{i+1}) = \bar{x}(r_i) + \int_{r_i}^{r_{i+1}}\bar{u}_r(\bar{x}(r))dr,
\end{equation}
where $\bar{x}(r)$ and $\bar{u}_r(x)$ are as in equations \ref{e:bar_x} and \ref{e:bar_u} (resp.). Now applying a generic solver (Section \ref{sss:generic_solvers}) one gets an approximation to $x(1)$. Bespoke ST algorithm then searches among the space of all $(s_t,t_r)$ for the one that in expectation leads to good approximations of $x(1)$ over a set of training sample trajectories.
Another related work is \cite{watson2021learning} that also optimizes for a sampling scheduler, concentrating on discrete diffusion models and perception losses.

\paragraph{Exponential Integrator \cite{song2022denoising,zhang2022fast,lu2022dpm,lu2023dpmsolver}.} For $\eps$/$x$-prediction diffusion model with VF as defined in \eqref{e:u_gaussian_path} the sampling ODE of \eqref{e:ode} takes the form
\begin{equation}\label{e:ode_guassian_path}
    \dot{x}(t) = \frac{\dot{\psi}_t}{\psi_t}x(t) +\eta\frac{\sigma_t\dot{\alpha}_t - \dot{\sigma}_t\alpha_t}{\psi_t}f_t(x(t)),
\end{equation}
where $f_t$ is the $\eps$/$x$-prediction, and
\begin{equation}
    (\psi_t,\eta) = \begin{cases}
        (\alpha_t,-1) &\text{if $f$ is $\eps$-pred}\\
        (\sigma_t,1) &\text{if $f$ is $x$-pred}
    \end{cases}.
\end{equation}
Now changing the original model's scheduler $(\alpha_t,\sigma_t)$ to 
\begin{equation}\label{e:ei_sche_trans}
    \bar{\alpha}_t=\frac{1}{\psi_r}\alpha_r, \quad \bar{\sigma}_r=\frac{1}{\psi_r}\sigma_r,
\end{equation}
which corresponds to the conditional paths $p_t(x|x_1)=\gN(x|x_1,\mathrm{snr}(r)^{-2}I)$ for $\eps$-prediction and $p_t(x|x_1)=\gN(x|\mathrm{snr}(r)x_1,I)$ for $x$-prediction, where as before $\mathrm{snr}(r)=\alpha_r/\sigma_r$. Using \eqref{e:bar_u} in \eqref{e:st_exact_sol} and rearranging leads to Exponential Integrators' basic formula (equivalent to the result after employing variation of constants method~\cite{lu2023dpmsolver}) 
\begin{equation}\label{e:exp_exact_sol}
    x(t_{i+1}) = \frac{\psi_{t_{i+1}}}{\psi_{t_i}}x(t_i) +y\psi_{t_{i+1}} \int_{\lambda_{t_i}}^{\lambda_{t_{i+1}}}e^{y\lambda}f_{\lambda}(x(\lambda))d\lambda,
\end{equation}
where $\lambda_t=\log \mathrm{snr}(t)$ and $f_\lambda = f_{t_\lambda}$, where $t_\lambda$ is the inverse of $\lambda_t$ which is defined since we assume $\lambda_t$ is monotonically increasing. Note that this transformation is also discussed~\cite{zhang2023fast}. Now using generic solvers (Section \ref{sss:generic_solvers}) to approximate the integral above leads to the desired solver. 

We are now ready to formulate our main theorem proving the relations depicted in Figure \ref{fig:ven-diagram}, proved in Appendix \ref{a:universality}:
\begin{restatable}[Solver Taxonomy]{theorem}{taxonomy}\label{thm:universality}
    The Runge-Kutta (RK and Exponential-RK). family is included in the Scale-Time RK family, while the Multistep family (Multistep and Exponential-Multistep) is included in the Scale-Time multistep family. The Scale-Time family is included in the Non-Stationary solvers family.
\end{restatable}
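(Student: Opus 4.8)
The plan is to prove the three inclusions separately, exploiting the fact that an NS update rule (\eqref{e:def_nsms}) permits an \emph{arbitrary} linear combination of all past iterates and velocities; this unrestricted linearity is ultimately what lets the NS family absorb every other solver. I would first dispose of the two generic-versus-Scale-Time inclusions, which are essentially structural, and then spend the bulk of the effort on the Scale-Time $\subseteq$ NS inclusion.

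For the inclusion of plain RK in ST-RK and of plain Multistep in ST-Multistep, I would observe that the Scale-Time family contains the \emph{identity} transformation $s_r\equiv 1$, $t_r=r$: substituting into \eqref{e:bar_u} gives $\bar u_r=u_r$, so \eqref{e:st_exact_sol} collapses to \eqref{e:exact_sol} and applying a generic RK (resp.\ Multistep) solver reproduces the plain method verbatim. For the exponential variants I would exhibit the specific ST transformation induced by the exponential-integrator scheduler change \eqref{e:ei_sche_trans}; via the conversion \eqref{e:conversion} this is a legitimate member of the ST family, and the derivation leading to \eqref{e:exp_exact_sol} shows that the exponential integrator is exactly this transformation followed by a generic solver applied to the rearranged transformed integral. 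Thus Exponential-RK $\subseteq$ ST-RK and Exponential-Multistep $\subseteq$ ST-Multistep.

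The heart of the proof is Scale-Time $\subseteq$ NS. The key computation is that the transformed velocity, evaluated at a rescaled iterate $\bar x_j=s_{r_j}x_j$, is a linear combination of the original iterate and its velocity:
\begin{equation*}
\bar u_{r_j}(\bar x_j)=\frac{\dot s_{r_j}}{s_{r_j}}\bar x_j+\dot t_{r_j}s_{r_j}u_{t_{r_j}}(\bar x_j/s_{r_j})=\dot s_{r_j}x_j+\dot t_{r_j}s_{r_j}u_j,
\end{equation*}
with $u_j=u_{t_{r_j}}(x_j)$. For a Scale-Time Multistep solver the transformed update $\bar x_{k+1}=\bar x_k+\sum_j\omega_{kj}\bar u_{r_j}(\bar x_j)$ therefore becomes, after dividing by $s_{r_{k+1}}$ to recover $x_{k+1}=\bar x_{k+1}/s_{r_{k+1}}$, an explicit linear combination of $\{x_j\}_{j\le k}$ and $\{u_j\}_{j\le k}$ --- precisely the NS form of \eqref{e:def_nsms}. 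For a Scale-Time RK solver I would enlarge the NS time discretization of \eqref{e:def_time_disc} to include the pulled-back intermediate stage times, treat each rescaled stage value as an NS iterate, and argue by induction on the stage index that every stage value and the final increment is a linear combination of the NS iterates and velocities accumulated so far; the same per-step linearity then exhibits the RK update as an NS update.

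I expect the main obstacle to be the Scale-Time RK case, where bookkeeping the stage structure requires care: one must verify that the intermediate evaluation times can be arranged into the monotone discretization required by \eqref{e:def_time_disc} (true for the standard RK nodes used in practice, with repeated nodes handled by allowing coincident times), and that the velocity field is only ever queried at genuine discretization points so that the matrices $U_i$ are well defined. A secondary subtlety lies in the exponential branch, where the exact integration of the $e^{y\lambda}$ weight in \eqref{e:exp_exact_sol} must be reconciled with the generic-solver quadrature applied to \eqref{e:st_exact_sol}; here I would lean on the change-of-variable identity that makes the two the same continuous integral, so that the exponential integrator is a generic solver acting in the transformed time variable, and I would further note that, since $f$ is affine in $u$ and $x$ through \eqref{e:u_gaussian_path}, any residual exponential weighting only fixes particular coefficients of a linear combination, which the unrestricted NS coefficients reproduce.
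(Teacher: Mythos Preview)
Your proposal is correct and follows essentially the same approach as the paper: the identity ST transformation handles the plain generic solvers, the scheduler-change transformation of \eqref{e:ei_sche_trans} handles the exponential variants, and the inclusion ST $\subseteq$ NS rests on the same linearity computation $\bar u_{r_j}(\bar x_j)=\dot s_{r_j}x_j+\dot t_{r_j}s_{r_j}u_j$ followed by division by $s_{r_{i+1}}$. The paper is actually terser than you on the RK stage bookkeeping---it simply asserts that the stage and step formulas \eqref{e:def_rk_step}--\eqref{e:def_rk_stage} are already linear combinations of past iterates and velocities and hence in NS form---so your explicit remark about absorbing the stage nodes into the NS discretization is a clarification rather than a different route.
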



\section{Previous work}
Most previous works on \emph{dedicated solvers} and {solver distillation} are already covered in Section \ref{s:expressive_power}. Here we discuss works that are not yet covered.
\pagebreak
Another related work on solver distillation is \cite{duan2023optimal} that removes time steps from a diffusion sampler and uses a similar parameterization to \eqref{e:unique_nsms} for approximating the missing $\eps$-prediction values with linear projection. In contrast, we formulate a single optimization problem over the NS family of solvers to directly minimize the solver's error. %

\textbf{Model distillation} also targets learning an efficient solver for the original model but for that end fine-tunes the original. Early attempts minimize directly a sample approximation loss~\cite{luhman2021knowledge}, while follow-up approaches progressively reduce the number of steps~\cite{salimans2022progressive, meng2023distillation}, or iteratively fine-tune from previous model's samples~\cite{liu2022flow}. We show that BNS solvers, that use only a tiny fraction of the parameter count of model distillation, can be trained quickly on a tiny training set and achieve nearly comparable sampling perceptual quality.

\begin{figure*}[t]
\centering
\begin{tabular}{@{\hspace{2pt}}c@{\hspace{1pt}}c@{\hspace{1pt}}c@{\hspace{1pt}}c@{\hspace{2pt}}@{\hspace{2pt}}c@{\hspace{1pt}}c@{\hspace{1pt}}c@{\hspace{1pt}}c@{\hspace{1pt}}c@{\hspace{0pt}}} 
{\scriptsize GT} & {\scriptsize NFE=20} & {\scriptsize NFE=16} & {\scriptsize NFE=12} & {\scriptsize GT} & {\scriptsize NFE=20} & {\scriptsize NFE=16} & {\scriptsize NFE=12} \\ 
\cincludegraphics[width=0.120 \textwidth]{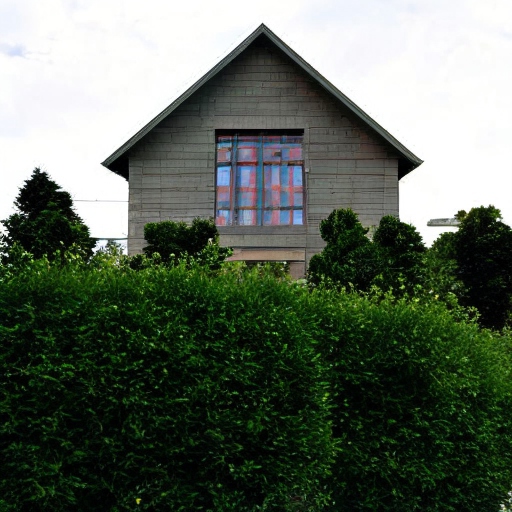} & 
    \begin{tabular}{@{\hspace{0pt}}c@{\hspace{0pt}}}    
        \includegraphics[width=0.120 \textwidth]{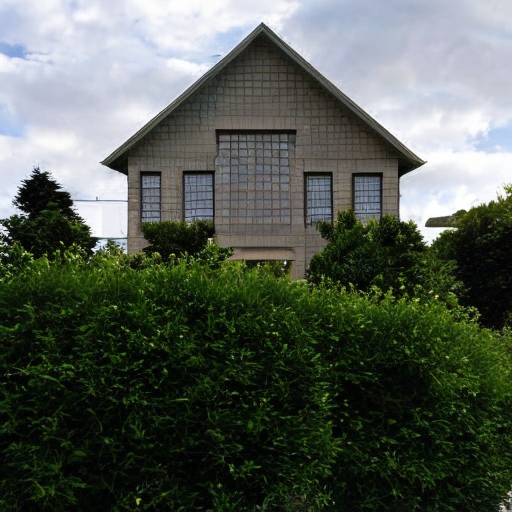} \\
        \includegraphics[width=0.120 \textwidth]{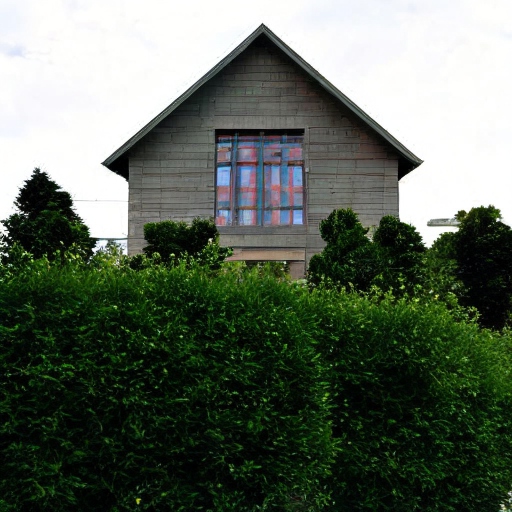}
    \end{tabular}
    &
    \begin{tabular}{@{\hspace{0pt}}c@{\hspace{0pt}}}   
        \includegraphics[width=0.120 \textwidth]{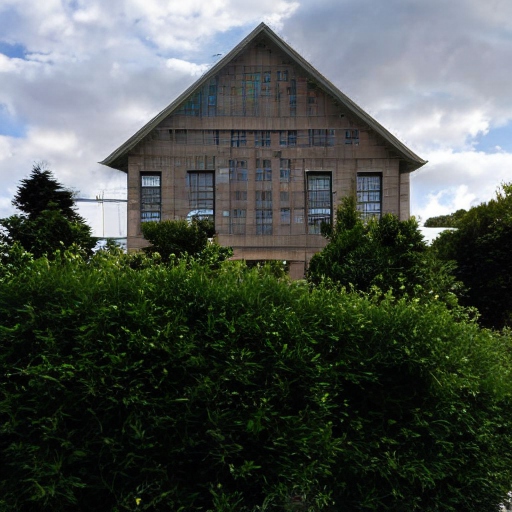} \\
        \includegraphics[width=0.120 \textwidth]{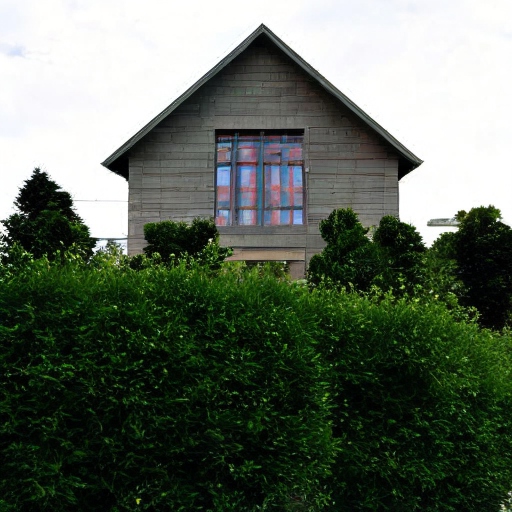}
    \end{tabular}
    &\begin{tabular}{@{\hspace{0pt}}c@{\hspace{0pt}}}  
        \includegraphics[width=0.120 \textwidth]{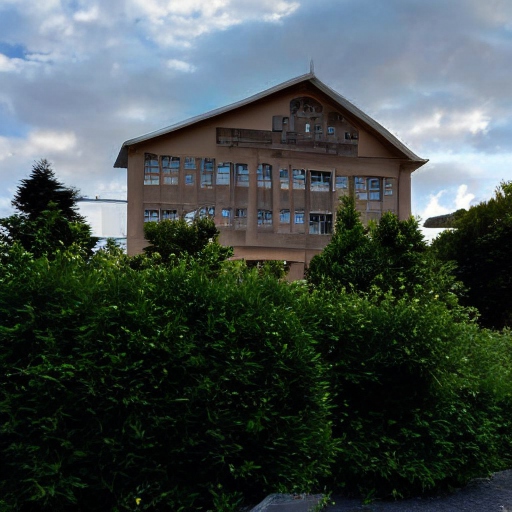} \\
        \includegraphics[width=0.120 \textwidth]{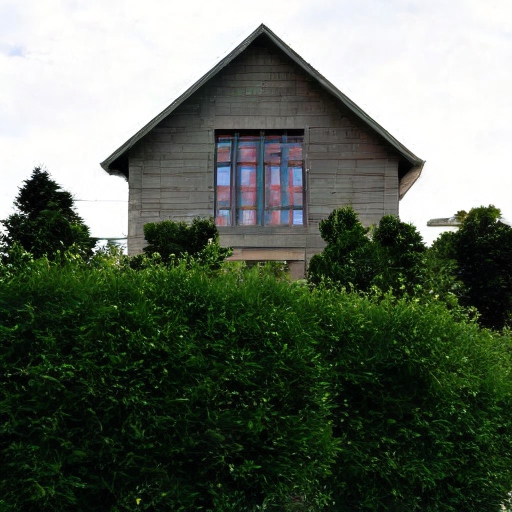}
    \end{tabular}
    &
\cincludegraphics[width=0.120 \textwidth]{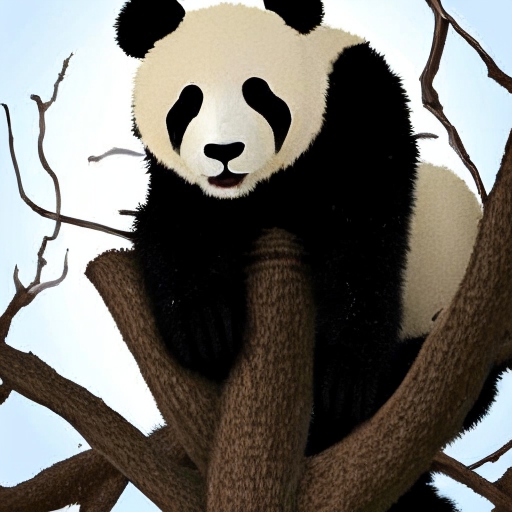} & 
    \begin{tabular}{@{\hspace{0pt}}c@{\hspace{0pt}}}   
        \includegraphics[width=0.120 \textwidth]{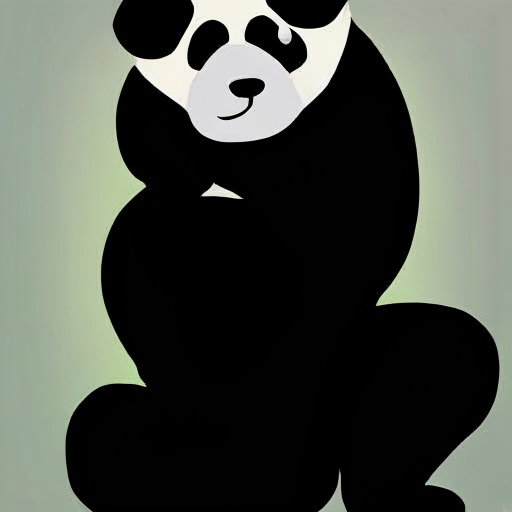} \\
        \includegraphics[width=0.120 \textwidth]{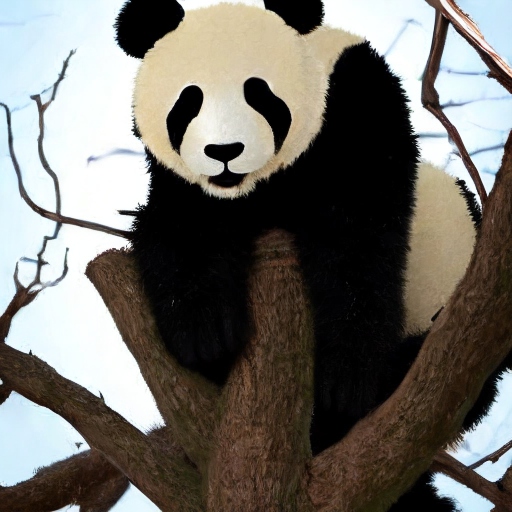}
    \end{tabular}
    &
    \begin{tabular}{@{\hspace{0pt}}c@{\hspace{0pt}}}
        \includegraphics[width=0.120 \textwidth]{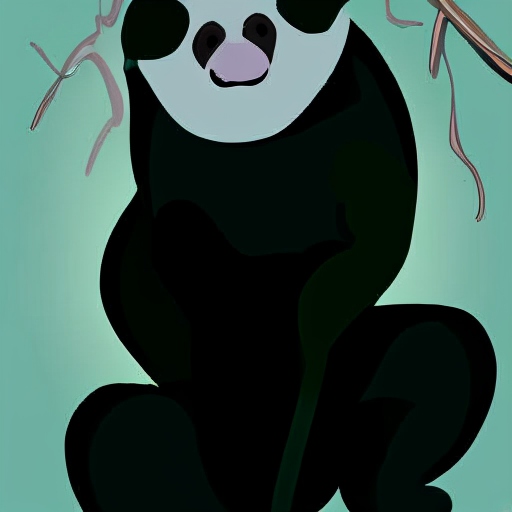} \\
        \includegraphics[width=0.120 \textwidth]{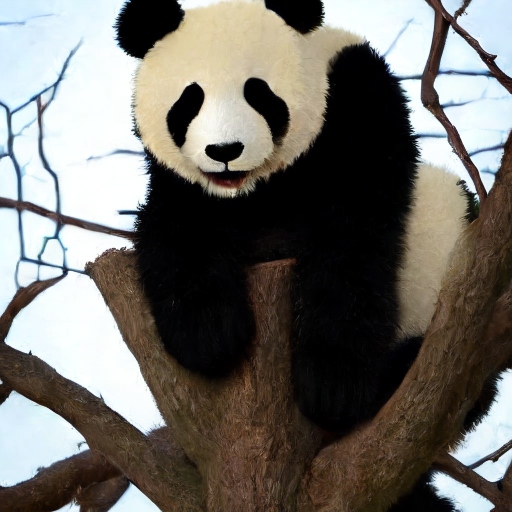}
    \end{tabular}
    &\begin{tabular}{@{\hspace{0pt}}c@{\hspace{0pt}}}   
        \includegraphics[width=0.120 \textwidth]{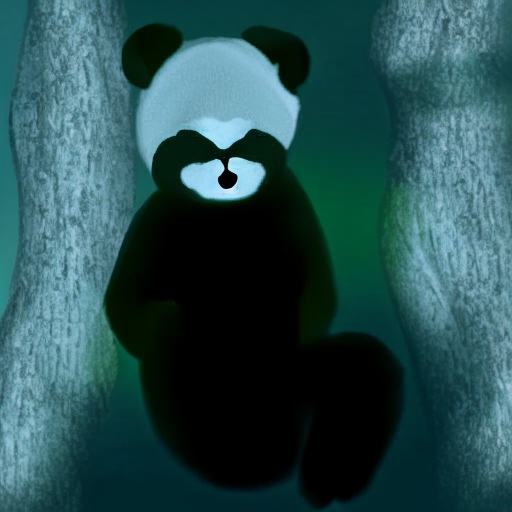} \\
        \includegraphics[width=0.120 \textwidth]{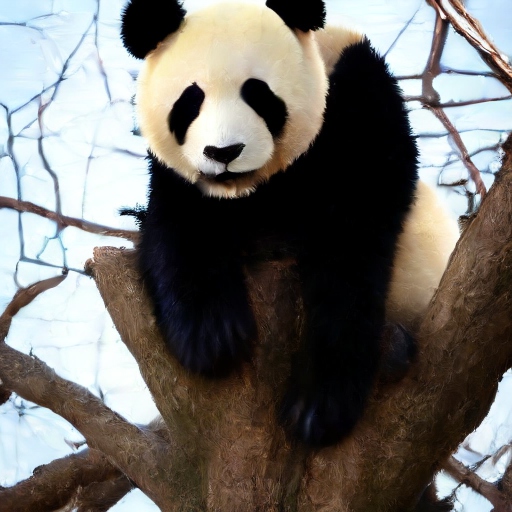}
    \end{tabular}
    &\begin{tabular}{@{\hspace{0pt}}c@{\hspace{0pt}}}   
        \raisebox{2.2\height}{\rotatebox[origin=c]{90}{\scriptsize RK-Midpoint}} \\
        \raisebox{2.0\height}{\rotatebox[origin=c]{90}{\scriptsize BNS}}
    \end{tabular} \vspace{-7pt}
\end{tabular}
\caption{
BNS vs.~RK-Midpoint on latent FM-OT Text-to-Image 512x512: (left) guidance scale $2.0$ with the caption \textit{"a building is shown behind trees and shrubs."}, (right) guidance scale $6.5$ with the \textit{"panda bear sitting in tree with no leaves."} \vspace{-10pt}
}\label{fig:images_shutterstock}
\end{figure*}

\section{Experiments}
We evaluate BNS solvers on: (i) Class conditional image generation, (ii) Text-to-Image generation, and (iii) Text-to-Audio generation. Additionally, we compare our method with model distillation.  
%
Unless stated otherwise, conditional sampling is done using classifier-free guidance (CFG)~\cite{ho2022classifier,zheng2023guided}. All BNS solvers are trained on $520$ pairs $(x_0, x(1))$ of noise and generated image using adaptive RK45~\cite{shampine1986some} solver. During optimization (Algorithm \ref{alg:nsms_training}) we log PSNR on a validation set of $1024$ such pairs and report results on best validation iteration. Further details are in Appendix \ref{a:bns_implementation}. 
As pre-trained models we use: (i) $\eps$-prediction Diffusion model~\cite{ho2020denoising} with the Variance Preserving  scheduler ($\eps$-VP)~\cite{song2020score}; (ii) Flow-Matching with the Conditional Optimal-Transport scheduler (FM-OT)\cite{lipman2022flow,liu2022flow}, and Flow-Matching/$v$-prediction with the Cosine scheduler (FM/$v$)~\cite{salimans2022progressive,albergo2022si}. Details of the pre-trained models are in Appendix \ref{a:pre_trained}.\vspace{-5pt}

\begin{table}
\centering
\resizebox{\columnwidth}{!}{
\begin{tabular}{l@{\hspace{5pt}}c@{\hspace{5pt}}c@{\hspace{5pt}}c@{\hspace{5pt}}c@{\hspace{5pt}}c} 
 \toprule
  $w=2.0$ &  NFE & PSNR $\uparrow$ & Pick Score $\uparrow$ & Clip Score $\uparrow$ & FID $\downarrow$ \\ \midrule 
 \textit{GT (DOPRI5)}
 &
 170
 & 
 $\infty$
 &
 20.95
 &
 0.252
 &
 15.20\\[5pt]
 \textit{RK-Euler}
 &
 \begin{tabular}[t]{c}
       12 \\ 16 \\ 20
 \end{tabular}
 & 
 \begin{tabular}[t]{c}
    13.95 \\ 14.86 \\ 15.71
 \end{tabular}
 &
 \begin{tabular}[t]{c}
    20.66 \\ 20.79 \\ 20.86
 \end{tabular}  
 &
 \begin{tabular}[t]{c}
    0.252 \\ 0.253 \\ 0.253
 \end{tabular} 
 &
 \begin{tabular}[t]{c}
    16.62 \\ 13.68 \\ 12.86
 \end{tabular} \\[30pt]
 \textit{RK-Midpoint}
 &
 \begin{tabular}[t]{c}
       12 \\ 16 \\ 20
 \end{tabular}
 & 
 \begin{tabular}[t]{c}
    15.05 \\ 16.28 \\ 17.46
 \end{tabular}
 &
 \begin{tabular}[t]{c}
    20.72 \\ 20.82 \\ 20.88
 \end{tabular}  
 &
 \begin{tabular}[t]{c}
    0.250 \\ 0.250 \\ 0.251
 \end{tabular} 
 &
 \begin{tabular}[t]{c}
    11.54 \\ 12.03 \\ 12.50
 \end{tabular} \\[30pt]
  \textbf{\textit{BNS}}
 &
 \begin{tabular}[t]{c}
       12 \\ 16 \\ 20
 \end{tabular}
 & 
 \begin{tabular}[t]{c}
 \cellcolor[HTML]{E8E8E8} 25.86 \\ \cellcolor[HTML]{E8E8E8} 29.13 \\ \cellcolor[HTML]{E8E8E8} 31.78
 \end{tabular}
 &
 \begin{tabular}[t]{c}
 \cellcolor[HTML]{E8E8E8} 20.83 \\ \cellcolor[HTML]{E8E8E8} 20.90 \\ \cellcolor[HTML]{E8E8E8} 20.91
 \end{tabular}  
 &
 \begin{tabular}[t]{c}
  0.252 \\ 0.252 \\ 0.252
 \end{tabular} 
 &
 \begin{tabular}[t]{c}
    13.93 \\ 14.48 \\ 14.68
 \end{tabular} \\
 \toprule
  $w=6.5$ &  NFE & PSNR $\uparrow$ & Pick Score $\uparrow$ & Clip Score $\uparrow$ & FID $\downarrow$ \\ \midrule 
 \textit{GT (DOPRI5)}
 &
 268
 & 
 $\infty$
 &
 21.16
 &
 0.260
 &
 23.99\\[5pt]
 \textit{RK-Euler}
 &
 \begin{tabular}[t]{c}
       12 \\ 16 \\ 20
 \end{tabular}
 & 
 \begin{tabular}[t]{c}
    9.61 \\ 10.02 \\ 10.52
 \end{tabular}
 &
 \begin{tabular}[t]{c}
    19.92 \\ 20.34 \\ 20.60
 \end{tabular}  
 &
 \begin{tabular}[t]{c}
    0.237 \\ 0.247 \\ 0.252
 \end{tabular} 
 &
 \begin{tabular}[t]{c}
    50.00 \\ 35.37 \\ 28.36
 \end{tabular} \\[30pt]
 \textit{RK-Midpoint}
 &
 \begin{tabular}[t]{c}
       12 \\ 16 \\ 20
 \end{tabular}
 & 
 \begin{tabular}[t]{c}
    9.65 \\ 9.98 \\ 10.34
 \end{tabular}
 &
 \begin{tabular}[t]{c}
    19.79 \\ 20.11 \\ 20.34
 \end{tabular}  
 &
 \begin{tabular}[t]{c}
    0.240 \\ 0.245 \\ 0.248
 \end{tabular} 
 &
 \begin{tabular}[t]{c}
    34.01 \\ 27.06 \\ 23.63
 \end{tabular} \\[30pt]
  \textbf{\textit{BNS}}
 &
 \begin{tabular}[t]{c}
       12 \\ 16 \\ 20
 \end{tabular}
 & 
 \begin{tabular}[t]{c}
      \cellcolor[HTML]{E8E8E8} 18.94 \\ \cellcolor[HTML]{E8E8E8} 21.23 \\ \cellcolor[HTML]{E8E8E8} 23.27
 \end{tabular}
 &
 \begin{tabular}[t]{c}
    \cellcolor[HTML]{E8E8E8} 20.92 \\ \cellcolor[HTML]{E8E8E8} 21.03 \\ \cellcolor[HTML]{E8E8E8} 21.09
 \end{tabular}  
 &
 \begin{tabular}[t]{c}
       0.261 \\ 0.260 \\  0.259
 \end{tabular} 
 &
 \begin{tabular}[t]{c}
      20.67 \\  21.93 \\ 22.56
 \end{tabular} \\
\bottomrule  
\end{tabular}
}\vspace{-5pt}
\caption{
BNS solvers vs.~GT and RK-Midpoint, RK-Euler on Text-to-Image 512 FM-OT evaluated on MS-COCO.\vspace{-10pt}
}\label{tab:shutterstock}
\end{table}

\subsection{Class condition image generation.}\vspace{-5pt}
We evaluate our method on the class conditional ImageNet-64/128 ~\cite{deng2009imagenet} dataset. As recommended by the authors~\cite{imagenet_website} to support fairness and preserve people's privacy we used the official \emph{face-blurred} data, see more details in Appendix \ref{a:experiments_imagenet}. We report PSNR w.r.t.~ground truth (GT) images generated with adaptive RK45 solver~\cite{shampine1986some}, and Fréchet
Inception Distance (FID)~\citep{heusel2017GANs}, both metrics are computed on 50k samples from the models. We train our BNS solvers for NFE $\in \set{4,6,\ldots,20}$ with RK-Midpoint initial solver and preconditioning $\sigma_0=1$, each taking $0.1-1\%$ fraction of the GPU days used to train the diffusion/flow models (\ie, 2-10 GPU days with Nvidia V100). We compare our results against various baselines, including generic solvers, exponential solvers:  DDIM~\cite{song2022denoising}, and DPM\cite{zhang2023fast}, as well as the BST~\cite{shaul2023bespoke} distilled solvers. Figure \ref{fig:graph_imagenet_main} shows our BNS solvers improves both PSNR and FID over all baselines. Specifically, in PSNR metric we achieve a large improvement of at least $5-10$dB above the runner-up baseline and get to $5\%$ from FID of the GT solver (about $160-320$ NFE) with $16$ NFE. Qualitative examples are shown in Figures \ref{fig:a_images_imagenet128} and \ref{fig:a_images_imagenet64_eps_vp} in Appendix \ref{a:experiments_imagenet}. Interestingly, for PSNR we see the order: BNS $>$ BST $>$ DPM $>$ RK-Midpoint/Euler, that matches well the solver hierarchy proved in Theorem \ref{thm:universality}, see also Figure \ref{fig:ven-diagram}. In Figure \ref{fig:graph_imagenet_BNS_vs_BST} we also show an ablation experiment comparing the Non-Stationary and Scale-Time family both optimized with Algorithm \ref{alg:nsms_training},  demonstrating the benefit in the NS family of solvers over the ST family. \vspace{-5pt}

\subsection{Text-to-Image generation.}\vspace{-5pt}
In considerations regarding the training data of Stable Diffusion (such as copyright infringements and consent), we have opted not to experiment with this model. Hence, we use a large latent FM-OT T2I model (2.2b parameters) trained on a proprietary dataset of 330m image-text pairs. Image size is $512\times512\times 3$ while the latent space is of dimension $64\times64\times 4$; see implementation details in Appendix \ref{a:pre_trained}. For evaluation we report PSNR w.r.t~GT images, similar to the class conditional task. Additionally, we use MS-COCO~\cite{lin2015microsoft} validation set and report perceptual metrics including Pick Score~\cite{kirstain2023pickapic}, Clip Score~\cite{ramesh2022hierarchical}, and zero-shot FID. All four metrics are computed on 30K generated and validation images and reported for guidance (CFG) scale $w=2$ and $w=6.5$. For each guidance scale, we optimize BNS solvers for NFE $\in\set{12,16,20}$ with initial solver RK-Euler. Each solver training takes 15-24 GPU days with Nvidia V100, consisting at most $0.3\%$ fraction of the GPU days used to train the latent FM-OT model. We find that $\sigma_0=5$ gives best results for $w=2$, while $\sigma_0=10$ for $w=6.5$. As baselines we compare our results to RK-Midpoint/Euler. Table \ref{tab:shutterstock} shows BNS solvers improves PSNR by at least $10$dB and consistently improves Pick Score as well. The Clip Score and FID metrics are not correlated with NFE and are considered noisy metrics for T2I evaluations \cite{kirstain2023pickapic}. Figure \ref{fig:images_shutterstock} shows qualitative examples. Additionally, Table \ref{tab:shutterstock_initial_solver}, and Figures \ref{fig:a_images_t2i_w2.0} and \ref{fig:a_images_t2i_w6.5} in Appendix \ref{a:t2i} shows an ablation comparing BNS solver to its initialization. Lastly, we note that higher guidance scale generally tends to be hard to approximate as can be noticed by comparing PSNR values for different NFEs in Table \ref{tab:shutterstock}. 

\begin{table}[]
    \centering
\resizebox{\columnwidth}{!}{
\begin{tabular}{l@{\hspace{0pt}}c@{\hspace{-1pt}}c@{\hspace{-1pt}}c@{\hspace{4pt}}c@{\hspace{0pt}}c@{\hspace{0pt}}c@{\hspace{0pt}}}
 \toprule
  CIFAR10 &  NFE & FID & GT-FID & Forwards & Training Set & Parameters  \\ \midrule 
 \textit{PD}
 \begin{tabular}[t]{c}
      \\ 
 \end{tabular}
 & 
 \begin{tabular}[t]{c}
      4\\ 8
 \end{tabular}
 &
 \begin{tabular}[t]{c}
      3.00 \\ 2.57 
 \end{tabular}  
 &
 \begin{tabular}[t]{c}
      2.51
 \end{tabular} 
 &
 \begin{tabular}[t]{c}
      211m\\192m
 \end{tabular}
 &
 \begin{tabular}[t]{c}
      50k \\ (CIFAR10)
 \end{tabular}
 &
 \begin{tabular}[t]{c}
      $>$50m
 \end{tabular} \\[20pt] 
\textbf{\textit{BNS}}
 & 
 \begin{tabular}[t]{c}
      4\\ 8\\ 
 \end{tabular}
 &
 \begin{tabular}[t]{c}
      25.20 \\ 2.73 
 \end{tabular}  
 &
 \begin{tabular}[t]{c}
      2.54
 \end{tabular} 
 &
 \begin{tabular}[t]{c}
      4.9m \\ 9.7m
 \end{tabular}
 &
 \begin{tabular}[t]{c}
      520
 \end{tabular}
 &
 \begin{tabular}[t]{c}
      18 \\ 52
 \end{tabular}\\ 
 %
 %
 %
  \toprule
  ImageNet-64 &  NFE & FID & GT-FID & Forwards & Training Set & Parameters  \\ \midrule 
 \textit{PD}
 & 
 \begin{tabular}[t]{c}
      4\\ 8\\ 16
 \end{tabular}
 &
 \begin{tabular}[t]{c}
    4.79 \\ 3.39\\ 2.97 
 \end{tabular}  
 &
 \begin{tabular}[t]{c}
      2.92
 \end{tabular} 
 &
 \begin{tabular}[t]{c}
      2457m \\ 2150m \\ 1843m
 \end{tabular}
 &
 \begin{tabular}[t]{c}
      1.2m \\ (ImageNet)
 \end{tabular}
 &
 \begin{tabular}[t]{c}
      $>$ 200m 
 \end{tabular} \\[30pt] 
 %
 %
 %
 %
 %
 %
\textbf{\textit{BNS}}
 & 
 \begin{tabular}[t]{c}
      4\\ 8\\ 16
 \end{tabular}
 &
 \begin{tabular}[t]{c}
      31.83 \\ 3.90 \\ 2.62 
 \end{tabular}  
 &
 \begin{tabular}[t]{c}
      2.50
 \end{tabular} 
 &
 \begin{tabular}[t]{c}
      2.5m \\ 4.9m \\ 9.7m 
 \end{tabular}
 &
 \begin{tabular}[t]{c}
      520
 \end{tabular} 
 &
 \begin{tabular}[t]{c}
      18 \\ 52 \\ 168 
 \end{tabular}  
 \\ 
 \bottomrule  
\end{tabular} 
}
    \caption{
    BNS solver vs.~Progressive Distillation on CIFAR10 and ImageNet-64$^1$ class conditional with $w=0$. \vspace{-12pt}
    }
\label{tab:bespoke_vs_disitillation}
\end{table}



\subsection{Bespoke solvers vs. Distillation}
We compare BNS solvers with Progressive Distillation (PD)~\cite{salimans2022progressive} on two datasets: CIFAR10~\cite{krizhevsky2009learning}, and class conditional ImageNet-64\footnote{Note that BNS evaluates on models trained with the blurred face ImageNet as recommended in ImageNet website~\cite{imagenet_website} to support fairness and preserve people's privacy.}. For BNS we use the FM-OT models. For fair comparison, we report both BNS and PD in the unguided setting (\ie, $w=0$); results for PD taken from~\cite{salimans2022progressive,meng2023distillation}. Table \ref{tab:bespoke_vs_disitillation} shows FID, number of forward passes in the model during training (Forwards), where computation is detailed in Appendix \ref{a:bespoke_vs_disitillation}, training set size (Training Set), and number of trained parameters in BNS/PD (Parameters). While on NFE $<8$ we fail to compete with PD's FID, we see that in the mid range of $8-16$ NFE our BNS solver gives comparable FID using significantly less compute. Specifically, for ImageNet-64 our training uses only $0.5\%$ of the forwards used by PD. Additionally, the low number of parameters allows us to generalize well despite the tiny training set. 

\subsection{Audio generation.}

\begin{figure}[t]
    \centering
    \begin{tabular}{@{\hspace{0pt}}c@{\hspace{0pt}}c@{\hspace{0pt}}}
    {\quad \ \scriptsize LibriSpeech TTS} & {\quad \ \scriptsize Audiocaps}\\
    \includegraphics[width=0.23\textwidth]{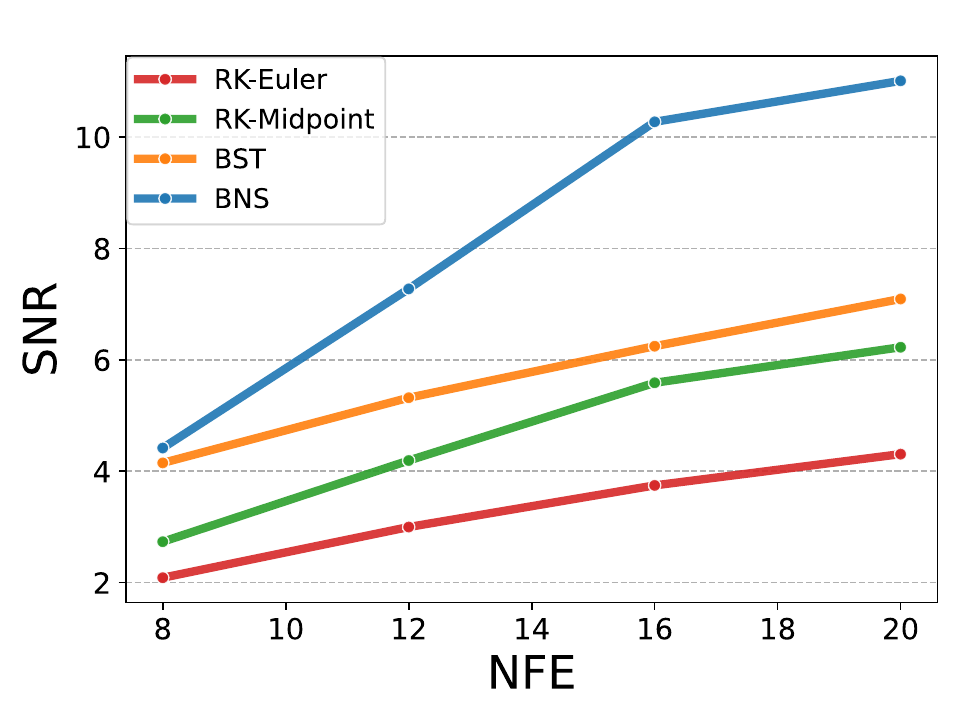}  & \includegraphics[width=0.23\textwidth]{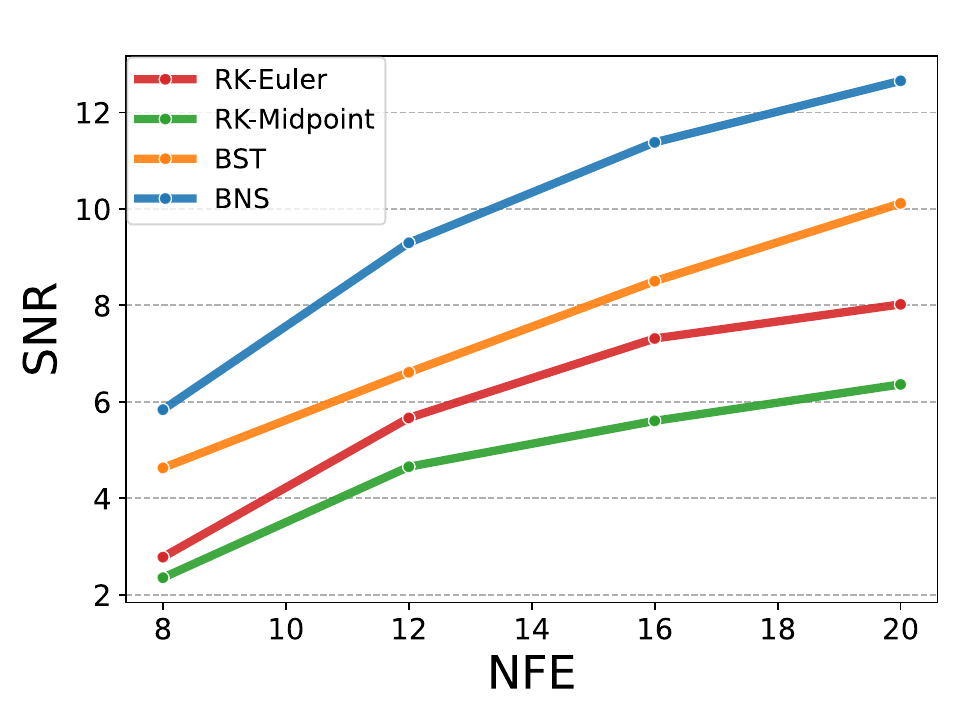} 
\end{tabular}    
     \caption{NFE vs.~SNR of BNS solvers, BST solvers, RK-Midpoint/Euler for Speech Generation FM-OT evaluated on: (left) LibriSpeech TTS, (right) Audiocaps. \vspace{-5pt} } 
    \label{fig:graphs_t2a}
\end{figure}

Next, we experiment with BNS solvers on an audio generation model.  We use the speech model introduced by ~\cite{vyas2023audiobox}, which is a latent Flow-Matching model trained to infill Encodec ~\cite{défossez2022high} features, conditioned on frame-aligned text transcripts. To train the BNS and BST solvers we generate 10k random samples from the training set using the RK45 solver.  
We evaluate on 8 different datasets, each of which are described in \ref{sec:app_audio_generation}.  In each setting, the model is given a transcript and a (possibly empty) audio prompt. The model needs to synthesize speech corresponding to the transcript and the speech should preserve the speaker style of the given audio prompt if one is provided.  We evaluate by computing the SNR (dB)  
w.r.t.~ground truth samples generated using the adaptive RK45 solver. \cref{fig:graphs_t2a} compares the SNR for two datasets at different NFEs for each solver.  The remaining datasets can be found in \cref{fig:a_graphs_t2a}.  Across all datasets BNS solver is consistently better than baselines improving 1dB-3dB from runner-up. 


\section{Conclusions and limitations}
We have introduced Bespoke Non-Stationary (BNS) solvers based on the provably expressive Non-Stationary (NS) solvers family and demonstrated this theoretical expressiveness translates to better samples approximation at low NFE presenting best PSNR per NFE results among a large set of baselines and applications. In contrast to previous solver distillation methods such as \cite{shaul2023bespoke} BNS don't need to a-priori fix a base solver and consequently an order, however it does need to optimize a different solver for different NFE, which opens an interesting future research question whether a single solver can handle different NFE without degrading performance. Further  limitations of BNS solvers is that they don't reach the extremely low NFE regime (1-4), and for T2I generation utilize CFG (increasing the effective batch size). An interesting future work is to further increase the expressiveness to further reduce NFE and potentially incorporate conditional guidance in the solver.

\newpage
\section{Impact Statement}
This paper presents a method for fast sampling of diffusion and flow models. There are many potential societal consequences of our work, none which we feel must be specifically highlighted here.
\section*{Acknowledgements}
NS is supported by a grant from Israel CHE
Program for Data Science Research Centers.



\bibliography{main}
\bibliographystyle{icml2024}

\newpage
\appendix
\onecolumn
\section{BNS Optimization}
\label{a:bns_optim}
\parameterization*
\begin{proof}[Proof of proposition \ref{prop:ns_param}]
    To prove the proposition we use induction on the step number $0\le i \le n-1$, where our induction hypothesis is the proposition itself. Remember, an Update rule of a NS solver represented by $(c_i,d_i)\in\R^{i+1}\times\R^{i+1}$ is
    \begin{align} 
        x_{i+1} &= X_ic_i + U_id_i\\
        &= \sum_{j=0}^i (c_{i})_jx_j + \sum_{j=0}^i (d_{i})_ju_j.
    \end{align}
    First, for the base case $i=0$, both $(c_0,d_0)\in \R\times\R$ and $(a_0,b_0)\in \R\times\R$, hence we can take $a_0=c_0$ and $b_0=d_0$. Let $k<n-1$, we assume the hypothesis is true for every $i \le k$. 
    Then we can write the $k^{\text{th}}$ step as
    \begin{align}
        x_{k+1} &= \sum_{j=0}^k (c_{k})_jx_j + \sum_{j=0}^k (d_{k})_ju_j\\
        &= (c_{k})_0x_0 + \sum_{j=0}^{k-1} (c_{k})_{j+1}x_{j+1} + \sum_{j=0}^k (d_{k})_ju_j\\
        &= (c_{k})_0x_0 + \sum_{j=0}^{k-1}(c_{k})_{j+1}\parr{a_{j}x_0+ \sum_{l=0}^{j}(b_{j})_lu_l} + \sum_{j=0}^k (d_{k})_ju_j\\
        &= \parr{(c_{k})_0+ \sum_{j=0}^{k-1} (c_{k})_ja_j}x_0 + \sum_{j=0}^{k-1} (c_{k})_{j+1} \sum_{l=0}^{j}(b_{j})_lu_l + \sum_{j=0}^k(d_{k})_ju_j\\
        &= \parr{(c_{k})_0+ \sum_{j=0}^{k-1} (c_{k})_ja_j}x_0 + \sum_{l=0}^{k-1}\sum_{j=l}^{k-1} (c_{k})_{j+1} (b_{j})_lu_l + \sum_{j=0}^k(d_{k})_ju_j\\
        &= \parr{(c_{k})_0+ \sum_{j=0}^{k-1} (c_{k})_ja_j}x_0 + \sum_{j=0}^{k-1}\sum_{l=j}^{k-1} (c_{k})_{l+1} (b_{l})_ju_j + \sum_{j=0}^k(d_{k})_ju_j\\
        &= a_{k}x_0 + \sum_{j=0}^k (b_{k})_ju_j
    \end{align}
    where in the \nth{2} equality we made the shift $j \mapsto j+1$, in the \nth{3} equality we substitute $(a_i,b_i)$ for $i\le k$ given by our induction assumption, in the \nth{5} equality we changed the order of summation to first sum on $j$ index and then on $l$, in the \nth{6} equality we only switched the notation of $l$ and $j$ indices, finally in the last equality we define
    \begin{equation}
        a_k = \parr{(c_{k})_0+ \sum_{j=0}^{k-1} (c_{k})_ja_j},\quad (b_k)_j = \sum_{l=j}^{k-1} (c_{k})_{l+1} (b_{l})_j + (d_k)_j, j=0,\ldots k-1,\quad (b_k)_k = (d_k)_k.
    \end{equation}
    Note, if the vectors $x_0, u_0,\ldots,u_k$ are linearly independent then the above coefficients, $a_k, (b_k)_0,\dots,(b_k)_k$ are unique.
\end{proof}
\section{Universality Of Non-Stationary Solvers}
\label{a:universality}
This appendix provides a proof of theorem \ref{thm:universality} and its visualization in the Ven diagram in figure \ref{fig:ven-diagram}. We state the first part of the theorem in lemma \ref{lem:st_subsume_genric_ei} and provide a stand-alone proof of the lemma. Then using lemma \ref{lem:st_subsume_genric_ei} we complete the proof of theorem \ref{thm:universality}.
\begin{lemma}\label{lem:st_subsume_genric_ei}
     The Runge-Kutta (RK and Exponential-RK) family is included in the Scale-Time RK family, while the Multistep family (Multistep and Exponential-Multistep) is included in the Scale-Time multistep family.
\end{lemma}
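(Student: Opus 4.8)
The plan is to prove the two inclusions by a single structural device: a Scale-Time RK (resp. multistep) solver is, by definition, an ordinary RK (resp. multistep) generic solver applied to the transformed ODE $\dot{\bar x}(r)=\bar u_r(\bar x(r))$ of \eqref{e:st_exact_sol}, whose velocity $\bar u_r$ is given by \eqref{e:bar_u}. Hence to embed any member of the left-hand family it suffices to exhibit one admissible transformation $(s_r,t_r)$ for which $\bar u_r$ is precisely the field that the target method integrates, and then run the \emph{same} generic solver on it. I would organize the argument as two cases (plain families and exponential families), each treated uniformly for RK and multistep.

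First I would dispatch the plain Runge-Kutta and multistep families with the \emph{identity} transformation $s_r\equiv 1$, $t_r=r$. These satisfy the admissibility conditions $t_0=0$, $t_1=1$, $s_0,s_1>0$, and substituting $\dot s_r=0$, $\dot t_r=1$, $s_r=1$ into \eqref{e:bar_u} gives $\bar u_r=u_r$, so the transformed ODE coincides with the original and the recovery $x(1)=s_1^{-1}\bar x(1)=\bar x(1)$ is trivial. Applying any generic RK or multistep rule to this transformed ODE returns exactly the corresponding generic solver, which settles the two ``plain'' inclusions.

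The substantive case is the exponential families. Here I would take the Scale-Time transformation induced, via the conversion formulas \eqref{e:conversion}, by the Exponential-Integrator scheduler change \eqref{e:ei_sche_trans}. The decisive computation is that this change annihilates the drift coefficient of the transformed field: for $\eps$-prediction \eqref{e:ei_sche_trans} gives $\bar\alpha_r\equiv 1$, and for $x$-prediction $\bar\sigma_r\equiv 1$, so by Table \ref{tab:velocity_field_coefficients} the transformed drift $\bar\beta_r=\dot{\bar\alpha}_r/\bar\alpha_r$ (resp. $\dot{\bar\sigma}_r/\bar\sigma_r$) vanishes. Consequently $\bar u_r(x)=\bar\gamma_r\bar f_r(x)$ has no linear term, and the transformed exact-solution identity \eqref{e:st_exact_sol} becomes, after the rearrangement noted in the text, exactly the Exponential-Integrator identity \eqref{e:exp_exact_sol}; the time-reparameterization freedom in $t_r$ is used to identify the transformed time $r$ with the log-SnR variable $\lambda$, and the scale $s_r$ with the $\psi$ and $e^{y\lambda}$ weight factors. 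Since the drift is gone, an ordinary RK (resp. multistep) rule on $\dot{\bar x}=\bar\gamma_r\bar f_r(\bar x)$ forms the same stage evaluations and the same linear combination as the exponential RK (resp. exponential multistep) rule applied to the integral in \eqref{e:exp_exact_sol}, identifying the two iterates.

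I expect the main obstacle to be precisely this last matching: showing that the stage nodes, weights, and the accumulated scale/time factors line up so that ``generic solver on the transformed ODE'' and ``exponential solver on the original ODE'' produce \emph{identical} iterates rather than merely methods of the same order. I would handle it by writing both schemes explicitly on a single step $[t_i,t_{i+1}]$ and using the vanishing of $\bar\beta_r$ to drop the exponential-of-the-linear-part factors that otherwise separate an exponential integrator from an ordinary one, after which the coefficients can be read off directly. The multistep cases then follow verbatim from the RK argument, with the RK stage evaluations replaced by interpolation through past nodes.
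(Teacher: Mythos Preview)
Your proposal follows essentially the same approach as the paper: the identity transformation $(s_r,t_r)=(1,r)$ handles the plain RK/multistep inclusions, and the scheduler change \eqref{e:ei_sche_trans} yields an ST transformation under which the transformed drift vanishes, so that a generic solver on the transformed ODE reproduces the exponential integrator. One small correction: the paper does \emph{not} use the time-reparametrization freedom in $t_r$ to identify $r$ with the log-SnR variable $\lambda$; it takes $t_r=r$ and $s_r=1/\psi_r$, and $\lambda$ enters only afterward as an ordinary change of integration variable in the resulting integral (this sidesteps the problem that $\lambda$ is unbounded while $t_r$ must map $[0,1]\to[0,1]$). With that adjustment, and writing $\bar f_r(x)=f_r(\psi_r x)$ rather than a genuinely transformed model, your sketch coincides with the paper's argument.
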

\begin{proof}[Proof of lemma \ref{lem:st_subsume_genric_ei}]
    Remeber, given a pair of Scale-Time (ST) transformation $(s_r,t_r)$ and a generic solver, its associated solver in the ST  solver family is defined as an approximation using the generic solver to the exact solution as in \eqref{e:exact_sol} for the transformed VF,
\begin{equation}\label{e:a_st_exact_sol}
    \bar{x}(r_{i+1}) = \bar{x}(r_i) + \int_{r_i}^{r_{i+1}}\bar{u}_r(\bar{x}(r))dr,
\end{equation}
where $\bar{x}(r)$ and $\bar{u}_r(x)$ are as in equations \ref{e:bar_x} and \ref{e:bar_u} (resp.). First consider the identity transformation as the ST transformation, that is
\begin{equation}
    s_r = 1, \quad t_r = r,
\end{equation}
then \eqref{e:a_st_exact_sol} coincides with \eqref{e:exact_sol}. In this case, applying the generic solver gives the solver itself, hence Multistep family is included in the ST Multistep family and RK family is included in ST RK family. Next, we consider an Exponential Integrator, it is defined as an approximation to the exact solution~\cite{lu2023dpmsolver,lu2022dpm-pp} as in \eqref{e:exp_exact_sol},
    \begin{equation}\label{e:a_exp_exact_sol}
    x(t_{i+1}) = \frac{\psi_{t_{i+1}}}{\psi_{t_i}}x(t_i) +\eta\psi_{t_{i+1}} \int_{\lambda_{t_i}}^{\lambda_{t_{i+1}}}e^{\eta\lambda}f_{\lambda}(x(\lambda))d\lambda,
\end{equation}
where $\lambda_t=\log \mathrm{snr}(t)$ and $f_\lambda = f_{t_\lambda}$, where $t_\lambda$ is the inverse of $\lambda_t$ which is defined since we assume $\lambda_t$ is monotonically increasing, and $\psi_t$ and $\eta$ are dependent on the scheduler and the objective $f$, 
\begin{equation}
    (\psi_t,\eta) = \begin{cases}
        (\alpha_t,-1) &\text{if $f$ is $\eps$-pred}\\
        (\sigma_t,1) &\text{if $f$ is $x$-pred}
    \end{cases}.
\end{equation}
Hence, it is enough to show there exist a ST transformation $(s_r,t_r)$, such that \eqref{e:a_exp_exact_sol} and \eqref{e:a_st_exact_sol} coincide. As mention in section \ref{s:expressive_power} in \eqref{e:ei_sche_trans} we consider the change of scheduler to
\begin{equation}
    \bar{\alpha}_t=\frac{1}{\psi_r}\alpha_r, \quad \bar{\sigma}_r=\frac{1}{\psi_r}\sigma_r.
\end{equation}
By \eqref{e:conversion} its corresponding ST transformation is
\begin{equation}
    s_r = \frac{1}{\psi_r}, \quad t_r = r,
\end{equation}
and by equations \ref{e:bar_x} and \eqref{e:bar_u} the transformed trajectory and VF are
\begin{equation}\label{e:a_exp_bar_x_u}
    \bar{x}(r) = \frac{x(r)}{\psi_r},\quad \bar{u}_r(x) = -\frac{\dot{\psi}_r}{\psi_r}x + \frac{1}{\psi_r}u_r(\psi_rx).
\end{equation}
For an objective $f$ either $\eps$-prediction or $x$-prediction, the VF u is as in \eqref{e:u_gaussian_path},
\begin{equation}\label{e:a_exp_u}
    u_r(x) = \frac{\dot{\psi}_r}{\psi_r}x +\eta\frac{L_r}{\psi_r}f_r(x)
\end{equation}
where $L_r = \sigma_r\dot{\alpha}_r - \dot{\sigma}_r\alpha_r$. Finally, substitute equations \ref{e:a_exp_bar_x_u} and \ref{e:a_exp_u} into \eqref{e:a_st_exact_sol} gives
\begin{align}
    \frac{x_{r_{i+1}}}{\psi_{r_{i+1}}} &= \frac{x_{r_{i}}}{\psi_{r_{i}}} + \eta\int_{r_i}^{r_{i+1}}\frac{L_r}{\psi_r^2}f_r(x)dr\\
    &= \frac{x_{r_{i}}}{\psi_{r_{i}}} + \eta\int_{r_i}^{r_{i+1}}\frac{d}{dr}\frac{1}{\eta}\parr{\frac{\alpha_r}{\sigma_r}}^{\eta}f_r(x)dr\\
    & = \frac{x_{r_{i}}}{\psi_{r_{i}}} + \int_{r_i}^{r_{i+1}}\frac{d}{dr}\parr{e^{\eta\lambda_r}}f_r(x)dr\\
    & = \frac{x_{r_{i}}}{\psi_{r_{i}}} + \int_{r_i}^{r_{i+1}}\eta\frac{d\lambda}{dr}e^{\eta\lambda_r}f_r(x)dr\\
    & = \frac{x_{r_{i}}}{\psi_{r_{i}}} + \eta\int_{\lambda_{r_i}}^{\lambda_{r_{i+1}}}e^{\eta\lambda}f_{\lambda}(x)d\lambda,
\end{align}
where in the \nth{2} equality we notice that $\frac{L_r}{\psi_r^2}=\frac{d}{dr}\frac{1}{\eta}\parr{\frac{\alpha_r}{\sigma_r}}^{\eta}$, in the \nth{3} equality we substitute $\lambda_r = \log(\alpha_r/\sigma_r)$,  in the \nth{4} equality used the chain rule to differentiate w.r.t. $r$,in the \nth{5} equality we changed the integration variable to $\lambda$, and  multiplied both sides by $\psi_{r_{i+1}}$ gives \eqref{e:a_exp_exact_sol}. 
\end{proof}
We are ready prove the main theorem:
\taxonomy*
\begin{proof}[Proof of lemma \ref{thm:universality}.]
    By lemma \ref{lem:st_subsume_genric_ei} it is left to show that the Non-Stationary (NS) solvers family includes the Scale-Time (ST) solvers family. Remember, given an ST transformation $(s_r,t_r)$, its associated solver is defined as an approximation using a generic solver to the exact solution as in \eqref{e:exact_sol} for the transformed VF. That is,
\begin{equation}\label{e:aa_st_exact_sol}
    \bar{x}(r_{i+1}) = \bar{x}(r_i) + \int_{r_i}^{r_{i+1}}\bar{u}_r(\bar{x}(r))dr,
\end{equation}
where $\bar{x}(r)$ and $\bar{u}_r(x)$ are as in equations \ref{e:bar_x} and \ref{e:bar_u} (resp.), and the generic solvers we consider are either a Multistep or RK method. Note by equations \ref{e:multistep}, \ref{e:def_rk_step}, and \ref{e:def_rk_stage} the update rules of both Multistep and RK methods are expressed as a linear combination of $x_i$ and $u_i$, hence they are included in the NS solver family. That is, for every such generic solver with $n$ steps there exists $,\bar{a}_i,\bar{b}_i\in\R^{i+1}$, $i=0,\ldots,n-1$, and a discretization $0=r_0,r_1,\ldots,r_n=1$ such that the ST solver update rule is
\begin{align}\label{e:a_st_step}
    \bar{x}_{r_{i+1}} &= \sum_{j=0}^{i}\bar{a}_{ij}\bar{x}_{r_j} + \sum_{j=0}^{i}\bar{b}_{ij}\bar{u}_{r_j}(\bar{x}_{r_j}).
\end{align}
We substitute the definition of $\bar{x}(r)$, $\bar{u}_r(x)$, and divide both sides of \eqref{e:a_st_step} by $s_{i+1}=s_{r_{i+1}}$,
\begin{align}
    x_{t_{i+1}} &= \sum_{j=0}^{i}\frac{\bar{a}_{ij}s_i}{s_{i+1}}x_{t_j} + \sum_{j=0}^{i}\frac{\bar{b}_{ij}}{s_{i+1}}\parr{\dot{s}_jx_{t_j} + \dot{t}_j s_ju_{t_j}(x_{t_j})}\\
    &= \sum_{j=0}^{i}\parr{\frac{\bar{a}_{ij}s_i}{s_{i+1}} +\frac{\bar{b}_{ij}\dot{s}_j}{s_{i+1}} }x_{t_j} + \sum_{j=0}^{i}\frac{\bar{b}_{ij}}{s_{i+1}\dot{t}_j s_j}u_{t_j}(x_{t_j})\\
    &= \sum_{j=0}^{i}a_{ij}x_{t_j} + \sum_{j=0}^{i}b_{ij}u_{t_j}(x_{t_j}),
\end{align}
where we denoted $t_{r_j}=t_j$ and we set
\begin{equation}
    a_{ij} = \parr{\frac{\bar{a}_{ij}s_i}{s_{i+1}} +\frac{\bar{b}_{ij}\dot{s}_j}{s_{i+1}} }, \quad b_{ij} = \frac{\bar{b}_{ij}}{s_{i+1}\dot{t}_j s_j}.
\end{equation}
\end{proof}
\section{Generic solvers}
\label{a:generic_solvers}

\paragraph{Adam-Bashforth and Multistep solvers.} The Adam-Bashforth (AB) solver is derived by replacing $u_t(x(t))$ in the integral in \eqref{e:exact_sol} with an interpolation polynomial $q(t)$ constructed with the $m$ previous data points $(t_{i-m+j},u_{i-m+j})$, $j=1,\ldots,m$. Integrating $q(t)$ over $[t_{i},t_{i+1}]$ leads to an $m$-step \emph{Adam-Bashforth} (AB) update formula: 
\begin{equation}
    x_{i+1} = x_{i-m+1} + h\sum_{j=1}^m b_j u_{i-m+j},
\end{equation}
where $h=t_{i+1}-t_i$. A general (stationary) $m$-step \emph{Multistep method} is defined with the more general update rule incorporating arbitrary linear combinations of previous $x_i,u_i$:
\begin{equation}\label{e:multistep}
    x_{i+1} = \sum_{j=1}^{m} a_j x_{i-m+j} + h\sum_{j=1}^{m}b_j u_{i-m+j},
\end{equation}
where $a_j,b_j\in \R, \ j=0,\ldots,m-1$ are constants (i.e., independent of $i)$.

\paragraph{Runge-Kutta.} This class of solvers approximates the integral of $u_t(x(t))$ in \eqref{e:exact_sol} with a quadrature rule using interior \emph{nodes} in the interval $[t_i,t_{i+1}]$. Namely, it uses the data points the data points $(t_{i}+hc_j,u_{t_{i}+hc_j}(\xi_{j}))$, where $c_j$ define the RK nodes, and $\xi_j\approx x(t_{i}+hc_j)$, $j=0,\ldots,m-1$. This leads to an update rule of the form
\begin{align}
        x_{i+1} &= x_{i} + h \sum_{j=0}^{m-1}b_ju_{t_i + hc_{j}}(\xi_j),\label{e:def_rk_step}\\
        \xi_j &= \begin{cases}
            x_i & j=0\label{e:def_rk_stage}\\
            x_i + h\sum_{k=0}^{j-1}a_{jk}u_{t_i + hc_{k}}(\xi_k) &  j>0 
        \end{cases},
    \end{align}
where the matrix $a\in\R^{m\times m}$ with $a_{jk}=0$ for $j\le k$ is called the RK matrix, and $b \in \R^m$ is the RK weight vector, both independent of $i$, \ie, stationary.

\begin{figure*}[!htbp]
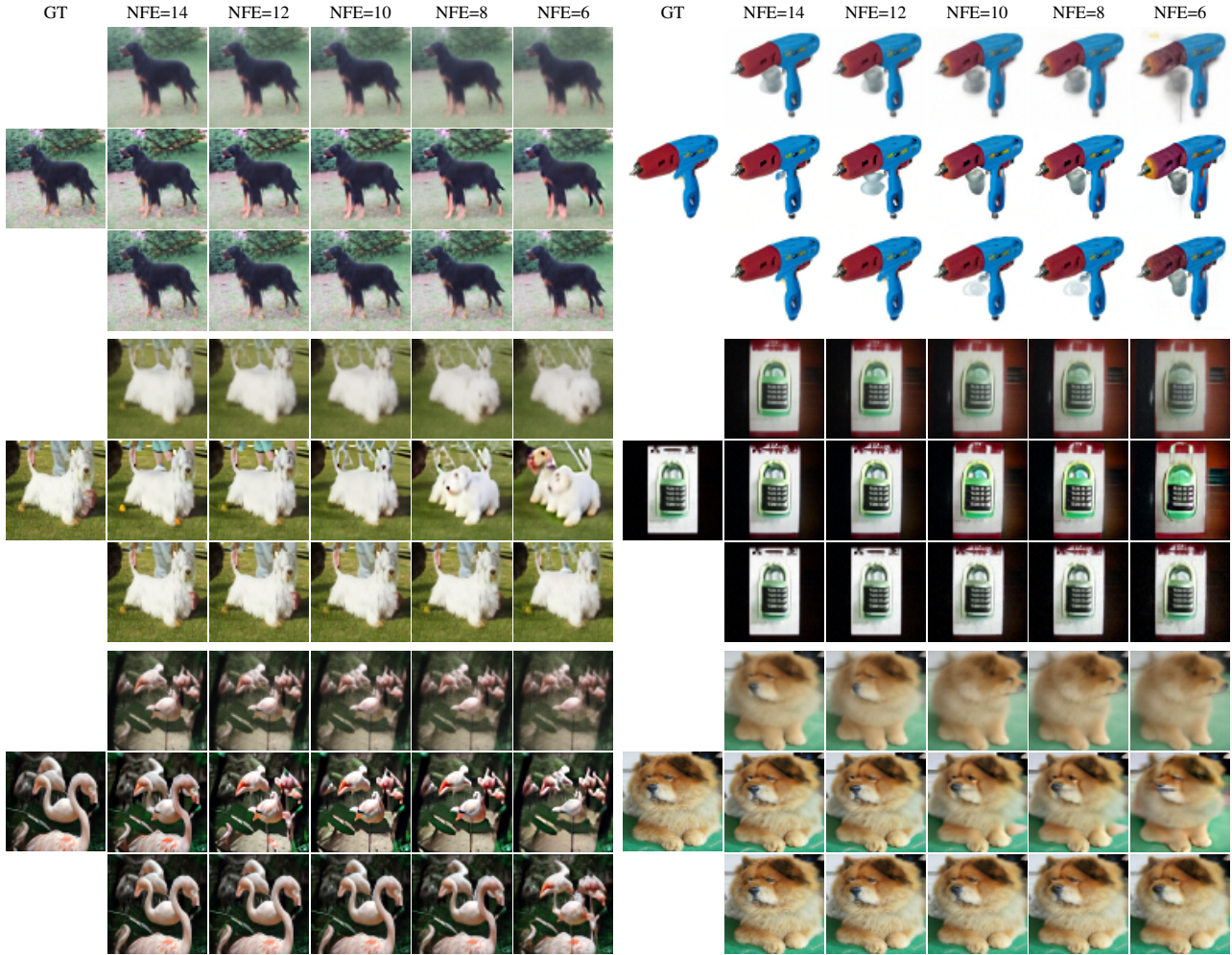

    \centering
\\
    \end{tabular}
    \caption{
    Comparison of generated Images on ImageNet-64 $\eps$-VP model with guidance scale $0.2$: (top row) DDIM, (middle row) DPM++(2M), (bottom row) BNS. 
    }\label{fig:a_images_imagenet64_eps_vp}
    \end{figure*}
\section{Experiments}
\subsection{Bespoke Non-Stationary training details}
In this section we provide the training details of the BNS solvers for the three tasks: (i) class conditional image generation, (ii) Text-to-Image generation, (iii) Text-to-Audio generation.  For all tasks training set and validation set were generate using using adaptive RK45 solver, optimization is done with Adam optimizer~\cite{kingma2017adam} and results are reported on best validation iteration.

\paragraph{Class condition image generation.} For this task we generated $520$ pairs of $(x_0,x(1))$, noise and image,  for the training set, and $1024$ such pairs for the validation set. For each model on this task, ImageNet-64 $eps$-VP/FM$v$-CM/FM-OT, and ImageNet-128 FM-OT, we train BNS solvers with NFE $\in\set{4,6,8,10,12,14,16,18,20}$. We use  with learning rate of $5e^{-4}$, a polynomial decay learning rate scheduler, batch size of $40$, for $15k$ iterations. We compute PSNR on the validation set every $100$ iterations.

\paragraph{Text-to-Image.} For this task, we generate two training and validation sets of $520$ and $1024$ pairs (resp.), one for guidance scale $w=2.0$ and one for $w=6.5$. The text prompts for the generation were taken from the training set of MS-COCO~\cite{lin2015microsoft}. For each guidance scale we train BNS solvers with NFE $\in\set{12,16,20}$, learning rate of $1e^{-4}$, cosine annealing learning rate scheduler, batch size of $8$, for $20k$ iterations. We compute PSNR on the validation set every $200$ iterations.

\paragraph{Text-to-Audio.} We generate a training set of $10k$ pairs and a validation set of $1024$ pairs. We train BNS solver with NFE $\in\set{8,12,16,20}$, and optimize with learning rate of $1e^{-4}$, cosine annealing learning rate scheduler, batch size of $40$, for $15k$ iterations. We compute SNR on the validation set every $5k$ itrations.

\label{a:bns_implementation}
\subsection{Class condition image generation}\label{a:experiments_imagenet}

\begin{figure}[t]
    \centering
    \begin{tabular}{@{\hspace{0pt}}c@{\hspace{0pt}}c@{\hspace{0pt}}}
    \multicolumn{2}{c}{{\quad \ \scriptsize ImageNet-64: FM-OT}}\\
       
    \includegraphics[width=0.25\textwidth]{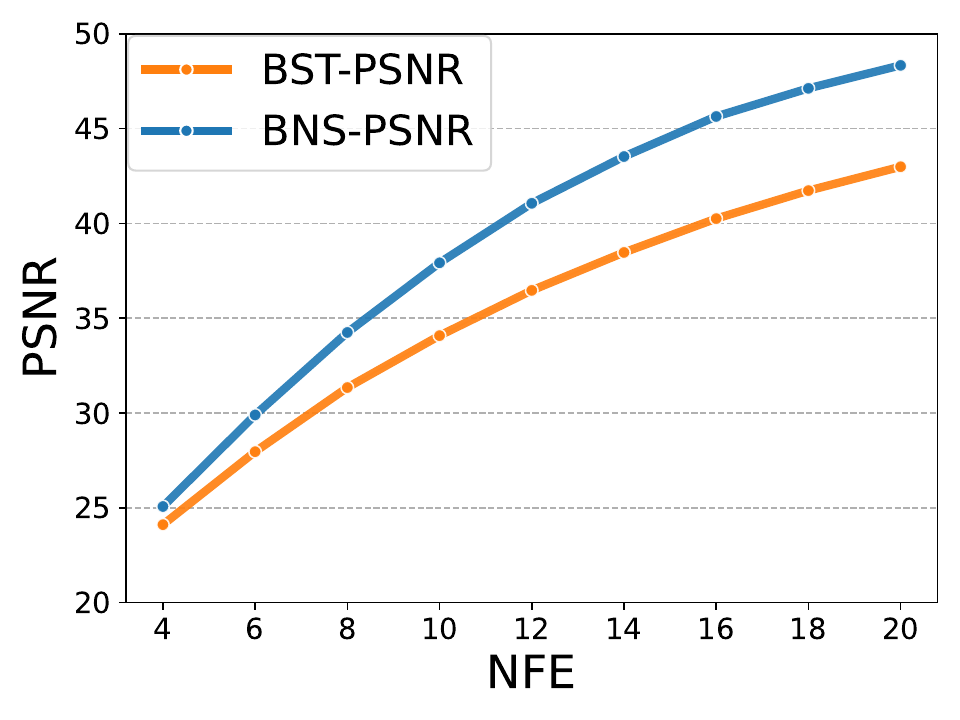}  & \includegraphics[width=0.25\textwidth]{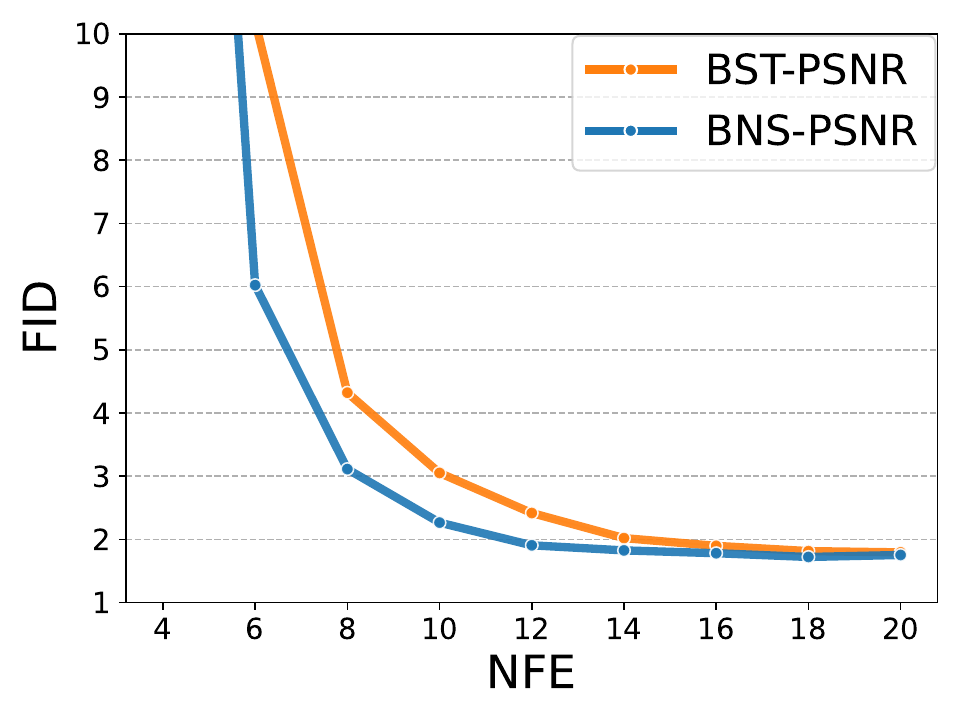} 
\end{tabular}    
     \caption{BNS vs.~BST on ImageNet-64 FM-OT trained with PSNR loss. } 
    \label{fig:graph_imagenet_BNS_vs_BST}
\end{figure}

\paragraph{Dataset and implementation details.}
As recommended by the authors~\cite{imagenet_website} we used the official \emph{face-blurred} data. Specifically, for the 64$\times$64 we downsample using the open source preprocessing scripts from~\cite{chrabaszcz2017downsampled}). For ImageNet-64 we use three models as described in Appendix \ref{a:pre_trained}, while for ImageNet-128 we only use FM-OT model due to computational constraints (training requires close to 2000 NVidia-V100 GPU days).

\begin{table}
\centering
\resizebox{0.98\textwidth}{!}{
\begin{tabular}{llcccccccccc} 
 \toprule
  ImageNet 64 & \begin{tabular}[t]{l}NFE:\end{tabular} & 4 & 6 & 8 & 10 & 12 & 14 & 16 &18 & 20 & GT \\ \midrule 
 \textbf{\textit{FM-OT}}
 &
 \begin{tabular}[t]{l}
     PSNR: \\ FID:
 \end{tabular}
 &
 \begin{tabular}[t]{c}
     25.08 \\ 27.35
 \end{tabular}
 &
 \begin{tabular}[t]{c}
     29.9 \\ 6.02
 \end{tabular}
 &
 \begin{tabular}[t]{c}
     34.25 \\ 3.11
 \end{tabular}
 &
 \begin{tabular}[t]{c}
     37.92 \\ 2.27
 \end{tabular}
 &
 \begin{tabular}[t]{c}
     41.06 \\ 1.91
 \end{tabular}
 &
 \begin{tabular}[t]{c}
     43.52\ \\ 1.83
 \end{tabular}
 &
 \begin{tabular}[t]{c}
     45.64 \\ 1.78
 \end{tabular}
 &
 \begin{tabular}[t]{c}
     47.12 \\ 1.72
 \end{tabular}
 &
 \begin{tabular}[t]{c}
     48.33 \\ 1.75
 \end{tabular}
 &
 \begin{tabular}[t]{c}
     $\infty$ \\ 1.68
 \end{tabular}\\[20pt]
 \textbf{\textit{FM$\bm{v}$-CS}}
 &
 \begin{tabular}[t]{l}
     PSNR: \\ FID:
 \end{tabular}
 &
 \begin{tabular}[t]{c}
     25.0 \\ 27.59
 \end{tabular}
 &
 \begin{tabular}[t]{c}
     29.76 \\ 6.05
 \end{tabular}
 &
 \begin{tabular}[t]{c}
     34.03 \\ 3.14
 \end{tabular}
 &
 \begin{tabular}[t]{c}
     37.7 \\ 2.4
 \end{tabular}
 &
 \begin{tabular}[t]{c}
     40.83 \\ 1.89
 \end{tabular}
 &
 \begin{tabular}[t]{c}
     43.3 \\ 1.82
 \end{tabular}
 &
 \begin{tabular}[t]{c}
     45.21 \\ 1.76
 \end{tabular}
 &
 \begin{tabular}[t]{c}
     46.68 \\ 1.74
 \end{tabular}
 &
 \begin{tabular}[t]{c}
     47.68 \\ 1.72
 \end{tabular}
 &
 \begin{tabular}[t]{c}
     $\infty$ \\ 1.71
 \end{tabular}\\[20pt]
 \textbf{\textit{$\bm{\eps}$-VP}}
 &
 \begin{tabular}[t]{l}
     PSNR: \\ FID:
 \end{tabular}
 &
 \begin{tabular}[t]{c}
     24.65 \\ 30.0
 \end{tabular}
 &
 \begin{tabular}[t]{c}
     29.49 \\ 7.21
 \end{tabular}
 &
 \begin{tabular}[t]{c}
     33.77 \\ 3.61
 \end{tabular}
 &
 \begin{tabular}[t]{c}
     37.48 \\ 2.88
 \end{tabular}
 &
 \begin{tabular}[t]{c}
     40.61 \\ 2.22
 \end{tabular}
 &
 \begin{tabular}[t]{c}
     43.21 \\ 1.97
 \end{tabular}
 &
 \begin{tabular}[t]{c}
     45.7 \\ 1.94
 \end{tabular}
 &
 \begin{tabular}[t]{c}
     47.32 \\ 1.97
 \end{tabular}
 &
 \begin{tabular}[t]{c}
     48.57 \\ 2.04
 \end{tabular}
 &
 \begin{tabular}[t]{c}
     $\infty$ \\ 1.84
 \end{tabular}\\ \midrule
 ImageNet 128 & \begin{tabular}[t]{l}NFE:\end{tabular} & 4 & 6 & 8 & 10 & 12 & 14 & 16 & 18 & 20 & GT \\ \midrule 
 \textbf{\textit{FM-OT}}
 &
 \begin{tabular}[t]{l}
     PSNR: \\ FID:
 \end{tabular}
 &
 \begin{tabular}[t]{c}
     23.43 \\ 36.17
 \end{tabular}
 &
 \begin{tabular}[t]{c}
     27.72 \\ 8.93
 \end{tabular}
 &
 \begin{tabular}[t]{c}
     31.41 \\ 4.53
 \end{tabular}
 &
 \begin{tabular}[t]{c}
     34.54 \\ 2.91
 \end{tabular}
 &
 \begin{tabular}[t]{c}
     37.37 \\ 2.48
 \end{tabular}
 &
 \begin{tabular}[t]{c}
     39.56 \\  2.38
 \end{tabular}
 &
 \begin{tabular}[t]{c}
     41.28 \\ 2.26
 \end{tabular}
 &
 \begin{tabular}[t]{c}
     42.38 \\ 2.28
 \end{tabular}
 &
 \begin{tabular}[t]{c}
     42.88 \\ 2.17
 \end{tabular}
 &
 \begin{tabular}[t]{c}
     $\infty$ \\ 2.16
 \end{tabular}\\
\bottomrule  
\end{tabular}
}\vspace{-5pt}
\caption{
PSNR and FID of BNS on ImageNet 64 FM-OT/FM$v$-CS/$\eps$-VP, and ImageNet 128 FM-OT.
}\label{tab:imagenet}
\end{table}
\subsection{Text-to-Image}
\label{a:t2i}
\begin{table}
\centering
\resizebox{0.5\textwidth}{!}{
\begin{tabular}{lccccc} 
 \toprule
  $w=2.0$ &  NFE & PSNR & Pick Score & Clip Score & FID \\ \midrule 
 \textit{GT (DOPRI5)}
 &
 170
 & 
 $\infty$
 &
 20.95
 &
 0.252
 &
 15.20\\[5pt]
 \textbf{\textit{Initial Solver}}
 &
 \begin{tabular}[t]{c}
       12 \\ 16 \\ 20
 \end{tabular}
 & 
 \begin{tabular}[t]{c}
    19.23 \\ 20.55 \\ 21.60
 \end{tabular}
 &
 \begin{tabular}[t]{c}
    20.65 \\ 20.78 \\ 20.85
 \end{tabular}  
 &
 \begin{tabular}[t]{c}
    0.257 \\ 0.256 \\ 0.256
 \end{tabular} 
 &
 \begin{tabular}[t]{c}
    21.14 \\ 18.19 \\ 16.96
 \end{tabular} \\[30pt]
  \textbf{\textit{BNS}}
 &
 \begin{tabular}[t]{c}
       12 \\ 16 \\ 20
 \end{tabular}
 & 
 \begin{tabular}[t]{c}
 \cellcolor[HTML]{EFEFEF} 25.86 \\ \cellcolor[HTML]{EFEFEF} 29.13 \\ \cellcolor[HTML]{EFEFEF} 31.78
 \end{tabular}
 &
 \begin{tabular}[t]{c}
 \cellcolor[HTML]{EFEFEF} 20.83 \\ \cellcolor[HTML]{EFEFEF} 20.90 \\ \cellcolor[HTML]{EFEFEF} 20.91
 \end{tabular}  
 &
 \begin{tabular}[t]{c}
  0.252 \\ 0.252 \\ 0.252
 \end{tabular} 
 &
 \begin{tabular}[t]{c}
    13.93 \\ 14.48 \\ 14.68
 \end{tabular} \\
 \toprule
  $w=6.5$ &  NFE & PSNR & Pick Score & Clip Score & FID \\ \midrule 
 \textit{GT (DOPRI5)}
 &
 268
 & 
 $\infty$
 &
 21.16
 &
 0.260
 &
 23.99\\[5pt]
 \textbf{\textit{Initial Solver}}
 &
 \begin{tabular}[t]{c}
       12 \\ 16 \\ 20
 \end{tabular}
 & 
 \begin{tabular}[t]{c}
 17.21 \\ 18.38 \\ 19.29
 \end{tabular}
 &
 \begin{tabular}[t]{c}
    20.69 \\ 20.87 \\ 20.97
 \end{tabular}  
 &
 \begin{tabular}[t]{c}
    0.264 \\ 0.263 \\ 0.262
 \end{tabular} 
 &
 \begin{tabular}[t]{c}
    29.63 \\ 28.02 \\ 27.15
 \end{tabular} \\[30pt]
  \textbf{\textit{BNS}}
 &
 \begin{tabular}[t]{c}
       12 \\ 16 \\ 20
 \end{tabular}
 & 
 \begin{tabular}[t]{c}
      \cellcolor[HTML]{EFEFEF} 18.94 \\ \cellcolor[HTML]{EFEFEF} 21.23 \\ \cellcolor[HTML]{EFEFEF} 23.27
 \end{tabular}
 &
 \begin{tabular}[t]{c}
    \cellcolor[HTML]{EFEFEF} 20.92 \\ \cellcolor[HTML]{EFEFEF} 21.03 \\ \cellcolor[HTML]{EFEFEF} 21.09
 \end{tabular}  
 &
 \begin{tabular}[t]{c}
      0.261 \\  0.260 \\ 0.259
 \end{tabular} 
 &
 \begin{tabular}[t]{c}
      20.67 \\ 21.93 \\  22.56
 \end{tabular} \\
\bottomrule  
\end{tabular}
}\vspace{-5pt}
\caption{
BNS solvers vs.~GT and Intial Solver (Euler + ST) on Text-to-Image 512 FM-OT evaluated on MS-COCO.
}\label{tab:shutterstock_initial_solver}
\end{table} 
In this section we compare our BNS solver with the initial solver. That is, the solver used in initialization of BNS optimization. Table \ref{tab:shutterstock_initial_solver} shows PSNR, Pick Score, Clip Score, and FID of the initial solver - RK-Euler with preconditioning $\sigma_0=5$ for guidance scale $w=2$ and $\sigma_0=10$ for guidance scale $w=6.5$, and the BNS solvers.
\subsection{Bespoke vs Distillation}
\label{a:bespoke_vs_disitillation}
We compare Bespoke with Progressive Distillation (PD)~\cite{salimans2022progressive}, for CIFAR10 we compare against results reported by \cite{salimans2022progressive} and for ImageNet 64 against results reported by \cite{meng2023distillation}. To count the number of forwards in the network that was done in training of Bespoke or PD, we count a forward in the model with a batch of $1$ as one forward. The training of PD for CIFAR10 model with $8$ and $4$ steps done by \cite{salimans2022progressive}  with $500k$ and $550k$ parameters updates (reps.), each update computed a batch of 128 images and requires two evaluation of the teacher model and one evaluation of student model, which sums to $192m$ and $211m$ forwards (resp.) The training of Bespoke for CIFAR10 with $8$ and $4$ steps was done with $30k$ parameters updates and batch of 40 for both, each update requires $8$ and $4$ evaluation of the model (resp.). For Bespoke we also take in account the cost of generating the training set that cost $85k$ forwards,  which in total sums to $9.7m$ and $4.9m$ forwards (resp.). For ImageNet 64 we compare against the unguided single-$w$ model trained by \cite{meng2023distillation}. The PD training of this model with $16$, $8$, and $4$ steps is done with $300k$, $350k$, and $400k$ parameters updates (resp.) and a batch of 2048, taking in account both teacher and student models evaluation gives $1843m$, $2150m$, and $2457m$ forwards (resp.). The Bespoke training for $16$, $8$, and $4$ was done with $15k$ parameters updates and a batch of $40$, each update requires $16$, $8$, and $4$ evaluation of the model (resp.), and the cost of generating the training set is $90k$ forwards, which in total sums to $9.7m$, $4.9m$, and $2.5m$ forwards (resp.).

\subsection{Audio Generation}

\label{sec:app_audio_generation}
\begin{figure*}[t]
\centering
\begin{tabular}{@{\hspace{0pt}}c@{\hspace{0pt}}c@{\hspace{0pt}}c@{\hspace{0pt}}c@{\hspace{0pt}}} 
    \begin{tabular}{@{\hspace{0pt}}c@{\hspace{0pt}}}
        {\quad \ \scriptsize Spotify}\\
        \includegraphics[width=0.25 \textwidth]{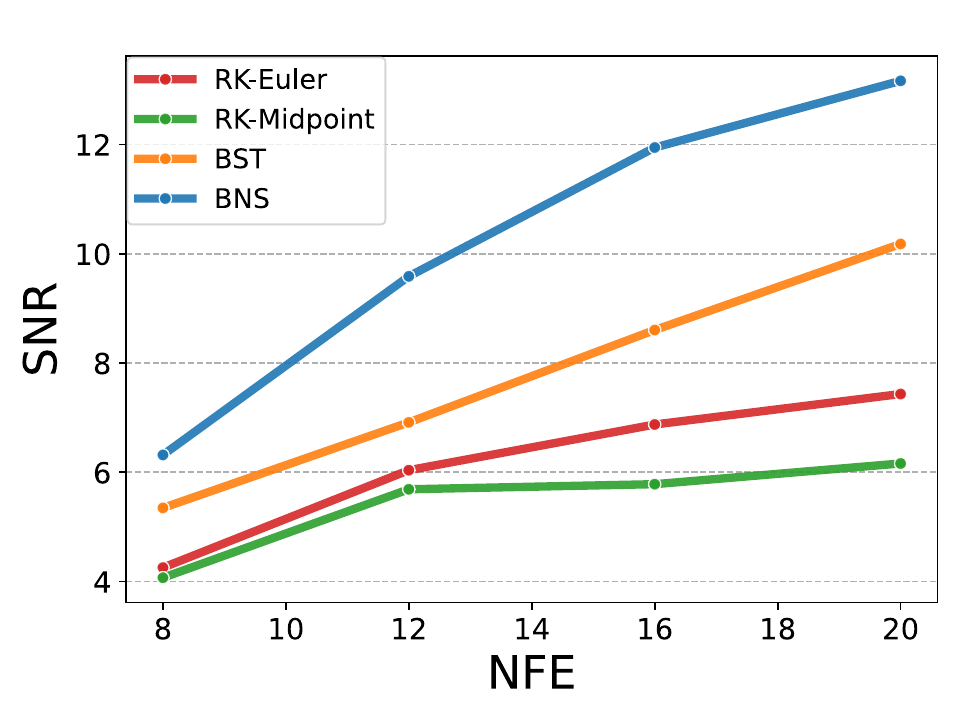} \\
        {\quad \ \scriptsize Accent}\\
        \includegraphics[width=0.25 \textwidth]{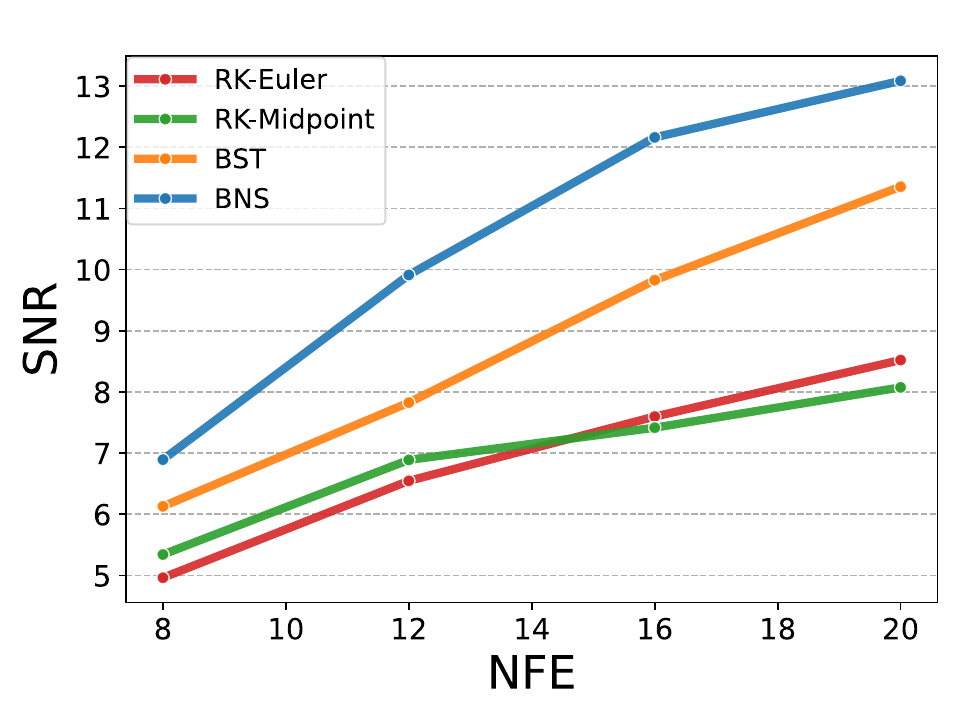}
    \end{tabular}
    &
    \begin{tabular}{@{\hspace{0pt}}c@{\hspace{0pt}}}
        {\quad \ \scriptsize Expresso}\\
        \includegraphics[width=0.25 \textwidth]{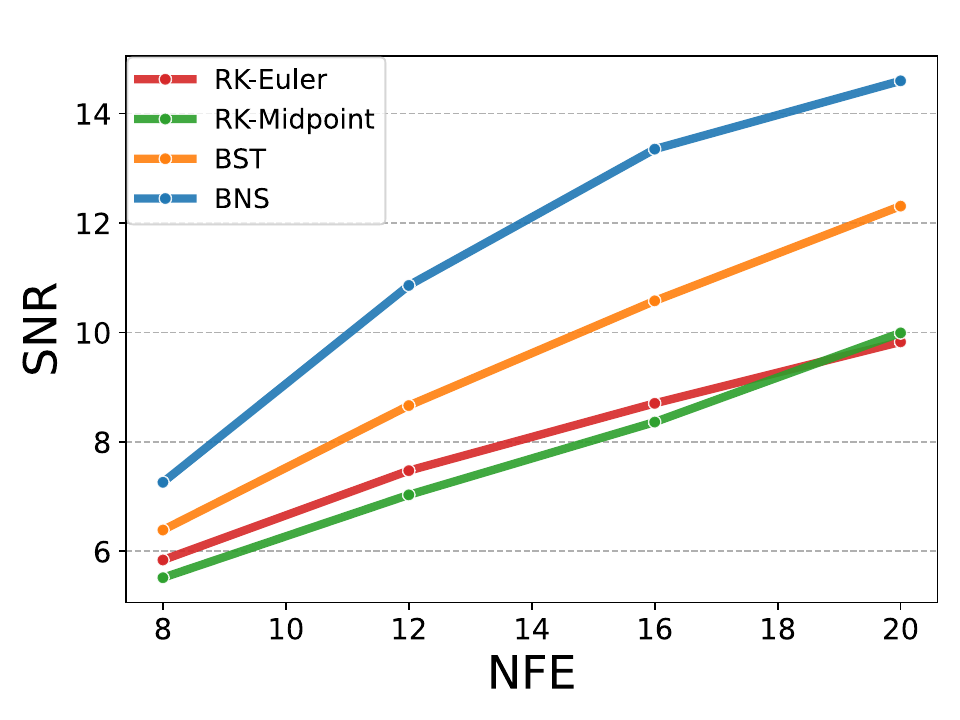} \\
        {\quad \ \scriptsize Switchboard}\\
        \includegraphics[width=0.25 \textwidth]{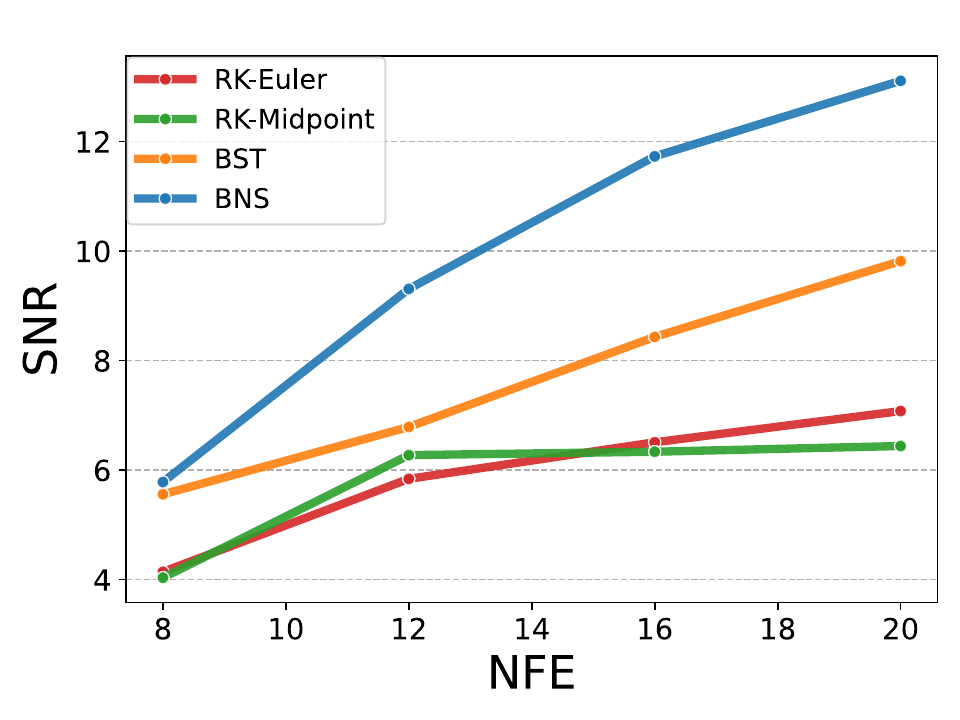}
    \end{tabular}
    &\begin{tabular}{@{\hspace{0pt}}c@{\hspace{0pt}}}  
        {\quad \ \scriptsize Librispeech}\\
        \includegraphics[width=0.25 \textwidth]{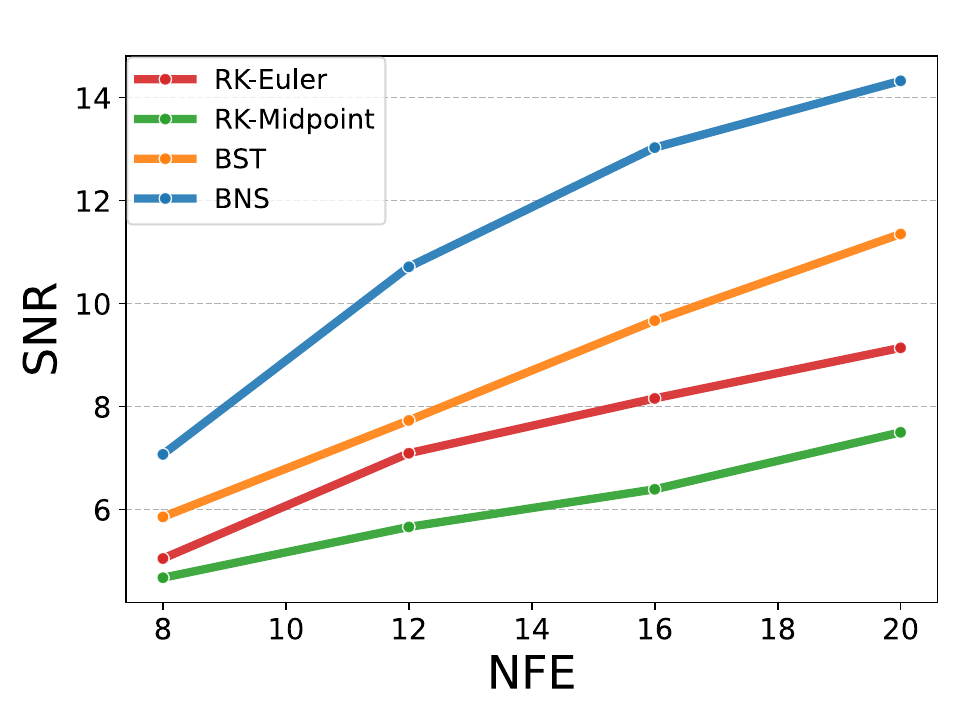} \\
        {\quad \ \scriptsize CommonVoice v13.0}\\
        \includegraphics[width=0.25 \textwidth]{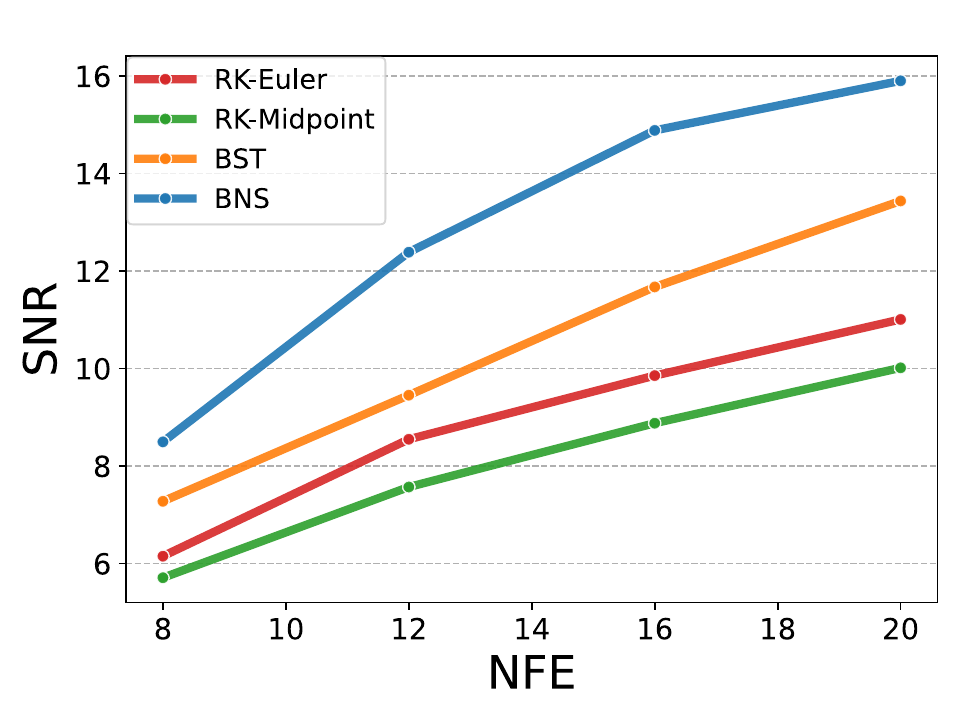}
    \end{tabular}
    &\begin{tabular}{@{\hspace{0pt}}c@{\hspace{0pt}}}
        {\quad \ \scriptsize Fisher}\\
        \includegraphics[width=0.25 \textwidth]{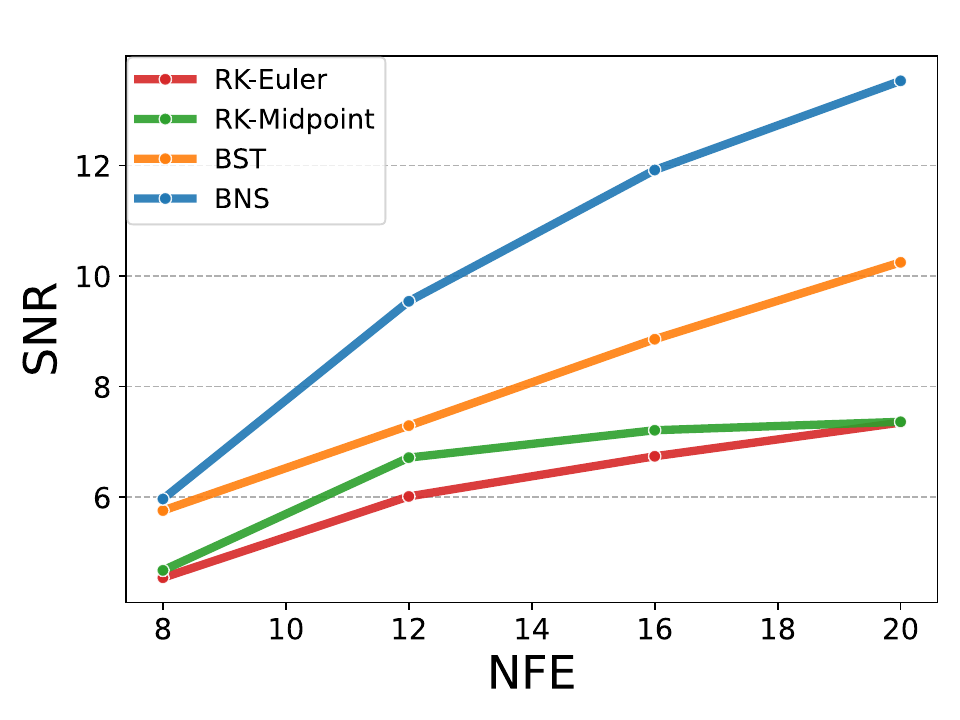} 
    \end{tabular}
%
%
%
\end{tabular}
\caption{NFE vs. SNR for each solver}
\label{fig:a_graphs_t2a}
\end{figure*}

The audio generation model was evaluated on the following datasets:

\begin{itemize}
    \item LibriSpeech (test-clean): audio book recordings that are scripted and relatively clean ~\cite{Panayotov2015LibrispeechAA}
    \item CommonVoice v13.0: sentences read by volunteers worldwide.  Covers a broader range of accents and are nosier compared to LibriSpeech \cite{Ardila2019CommonVA}
    \item Switchboard: a conversational speech corpus ~\cite{godfrey1992switchboard}
    \item Expresso: A multispeaker expressive speech dataset covering 7 different speaking styles. ~\cite{nguyen2023expresso}
    \item Accent: An internal expressive and accented dataset.
    \item Audiocaps: A subset of the AudioSet dataset. 
 Contains sound sourced from YouTube videos.  ~\cite{kim2019audiocaps}
    \item Spotify: podcast recordings \cite{clifton2020spotify}
    \item Fisher: conversational speech data ~\cite{cieri2005fishers}
\end{itemize}

\begin{table}
\centering
\begin{tabular}{llrrrrrrrr}
\toprule
 &  & Accent & Audiocaps & CV13 & Expresso & Fisher & LS & Spotify & Switchboard \\
Solver & NFE &  &  &  &  &  &  &  &  \\
\midrule
\multirow[t]{4}{*}{BNS} & 8 & 0.662 & 0.391 & 0.611 & 0.608 & 0.586 & 0.735 & 0.540 & 0.610 \\
 & 12 & 0.662 & 0.396 & 0.610 & 0.607 & 0.588 & 0.734 & 0.541 & 0.611 \\
 & 16 & 0.661 & 0.396 & 0.608 & 0.604 & 0.587 & 0.731 & 0.539 & 0.610 \\
 & 20 & 0.661 & 0.397 & 0.608 & 0.604 & 0.588 & 0.732 & 0.540 & 0.610 \\
\cline{1-10}
\multirow[t]{4}{*}{BST} & 8 & 0.662 & 0.387 & 0.608 & 0.605 & 0.587 & 0.732 & 0.540 & 0.610 \\
 & 12 & 0.663 & 0.398 & 0.609 & 0.605 & 0.589 & 0.733 & 0.544 & 0.611 \\
 & 16 & 0.662 & 0.399 & 0.609 & 0.602 & 0.590 & 0.731 & 0.544 & 0.611 \\
 & 20 & 0.661 & 0.397 & 0.608 & 0.602 & 0.589 & 0.731 & 0.542 & 0.611 \\
\cline{1-10}
\multirow[t]{4}{*}{Euler} & 8 & 0.662 & 0.387 & 0.609 & 0.605 & 0.584 & 0.732 & 0.544 & 0.608 \\
 & 12 & 0.664 & 0.395 & 0.611 & 0.605 & 0.588 & 0.734 & 0.546 & 0.612 \\
 & 16 & 0.665 & 0.402 & 0.612 & 0.605 & 0.590 & 0.734 & 0.548 & 0.613 \\
 & 20 & 0.665 & 0.404 & 0.612 & 0.605 & 0.591 & 0.734 & 0.550 & 0.614 \\
\cline{1-10}
\multirow[t]{4}{*}{Midpoint} & 8 & 0.664 & 0.391 & 0.608 & 0.600 & 0.592 & 0.731 & 0.543 & 0.614 \\
 & 12 & 0.664 & 0.399 & 0.608 & 0.601 & 0.593 & 0.731 & 0.547 & 0.615 \\
 & 16 & 0.662 & 0.398 & 0.608 & 0.602 & 0.590 & 0.731 & 0.543 & 0.611 \\
 & 20 & 0.661 & 0.396 & 0.608 & 0.602 & 0.589 & 0.731 & 0.539 & 0.610 \\
\cline{1-10}
RK45 & adaptive & 0.661 & 0.396 & 0.608 & 0.602 & 0.588 & 0.730 & 0.538 & 0.610 \\
\cline{1-10}
\bottomrule
\end{tabular}
\caption{Speaker similarity for each solver (higher is better)}
\label{tab:audio_ss}
\end{table}

\begin{table}
\centering
\begin{tabular}{llrrrrrrrrr}
\toprule
 &  & Accent & Audiocaps & CV13 & Expresso & Fisher & LS & LS TTS & Spotify & Switchboard \\
Solver & NFE &  &  &  &  &  &  &  &  &  \\
\midrule
\multirow[t]{4}{*}{BNS} & 8 & 0.86 & 3.98 & 3.04 & 3.11 & 7.71 & 3.01 & 3.12 & 3.48 & 11.02 \\
 & 12 & 0.98 & 3.61 & 3.38 & 3.05 & 7.66 & 3.41 & 3.23 & 2.60 & 11.39 \\
 & 16 & 1.02 & 3.69 & 3.10 & 3.09 & 7.75 & 3.16 & 3.25 & 2.59 & 12.54 \\
 & 20 & 1.07 & 3.56 & 3.07 & 3.21 & 7.87 & 3.27 & 3.33 & 2.58 & 10.50 \\
\cline{1-11}
\multirow[t]{4}{*}{BST} & 8 & 0.90 & 3.87 & 3.16 & 3.11 & 7.77 & 3.18 & 3.06 & 2.89 & 10.17 \\
 & 12 & 1.02 & 4.05 & 3.16 & 3.21 & 7.42 & 3.22 & 3.19 & 3.16 & 9.83 \\
 & 16 & 1.04 & 3.74 & 3.11 & 3.17 & 7.50 & 3.35 & 3.16 & 3.16 & 10.35 \\
 & 20 & 0.95 & 3.81 & 3.41 & 2.99 & 7.58 & 4.26 & 3.18 & 2.98 & 11.19 \\
\cline{1-11}
\multirow[t]{4}{*}{Euler} & 8 & 0.90 & 3.49 & 3.38 & 3.05 & 7.05 & 3.31 & 2.81 & 2.81 & 12.36 \\
 & 12 & 0.98 & 3.79 & 3.13 & 3.05 & 7.71 & 2.99 & 2.92 & 3.44 & 10.75 \\
 & 16 & 0.99 & 3.73 & 3.35 & 3.11 & 7.37 & 3.12 & 3.04 & 3.65 & 9.40 \\
 & 20 & 0.95 & 3.74 & 3.13 & 3.11 & 7.83 & 3.16 & 3.16 & 2.80 & 9.82 \\
\cline{1-11}
\multirow[t]{4}{*}{Midpoint} & 8 & 1.03 & 4.25 & 3.26 & 3.05 & 7.66 & 3.10 & 3.03 & 3.95 & 9.46 \\
 & 12 & 0.95 & 3.97 & 3.37 & 3.17 & 7.30 & 3.29 & 3.12 & 3.32 & 7.84 \\
 & 16 & 0.98 & 3.95 & 3.34 & 3.19 & 7.43 & 3.50 & 3.19 & 2.83 & 10.72 \\
 & 20 & 1.08 & 3.81 & 3.24 & 3.17 & 7.67 & 3.12 & 3.13 & 2.60 & 12.33 \\
\cline{1-11}
RK45 & adaptive & 1.04 & 3.76 & 3.43 & 3.13 & 7.67 & 3.27 & 3.31 & 2.88 & 10.75 \\
\cline{1-11}
\bottomrule
\end{tabular}
\caption{WER for each solver (lower is better)}
\label{tab:audio_wer}
\end{table}

Additionally we evaluate the solvers using word error rate (WER), and speaker similarity. For WER the generated audio is transcribed using Whisper ~\cite{Radford2022RobustSR} and then WER is computed against the transcript used to generate the audio.  We quantify the speaker similarity by embedding both the audio prompt and the generated audio using WavLM-TDCNN \cite{Chen2021WavLMLS}, and compute the cosine similarity between the embeddings.  In general these metrics do not accurately reflect the sample quality of different solvers.  In instances where a solver generates a low-quality sample we qualitatively find that the speaker still sounds the same and the audio is intelligible, but there are artifacts in the audio such as static, background noise, etc. which are are not quantified by speaker similarity or WER.  As can be seen from \cref{tab:audio_ss} and \cref{tab:audio_wer} there is little variance in these metrics across solvers.  

\paragraph{Conditioning of the audio model.}
The model takes in three tensors, all of the same length: a noise tensor and conditioning which is constructed of a masked Encodec features and frame-aligned token embeddings.  These get concatenated together channel-wise, and input to the model to produce the resulting Encodec features for the entire sequence.  This is then fed to the Encodec decoder to produce the final waveform.  

\section{Pre-trained models}
\label{a:pre_trained}
In this section describe the training objective that pre-trained model we used were trained with and their schedulers. In addition, we provide architecture details for our CIFAR10, ImageNet, and Text-to-Image models. 
\paragraph{Training obejective and schedulers.} The FM-OT and FM/$v$-CS model where trained with Conditional Flow Matching (CFM) loss derived in \cite{lipman2022flow}. That is,
\begin{equation}
    \gL_{\text{CFM}}(\theta) = \E_{t, p_0(x_0), q(x_1)}\norm{u_t(x_t;\theta) - (\dot{\sigma}_tx_0 + \dot{\alpha_t}x_1)}^2,
\end{equation}
where $t$ is uniform on $[0,1]$, $p_0(x_0)=\gN\parr{x_0|0,I}$, $q(x_1)$ is the data distribution, $u_t$ is the network, $(\alpha_t, \sigma_t)$ is the scheduler, and $x_t=\sigma_tx_0 + \alpha_tx_1 \sim p_t(x|x_1)$ as in \eqref{e:def_gaussian_path}. The FM-OT scheduler is
\begin{equation}
    \alpha_t=t, \quad \sigma_t=1-t,
\end{equation}
and the FM/$v$-CS scheduler is 
\begin{equation}
    \alpha_t=\sin\frac{\pi}{2}t, \quad \sigma_t=\cos\frac{\pi}{2}t.
\end{equation}
The $\eps$-VP model was trained on a different objective, the noise prediction loss as in \cite{ho2020denoising} and \cite{song2020score} with the VP scheduler. That is,
\begin{equation}
    \gL_{\text{noise}}(\theta) = \E_{t, p_0(x_0), q(x_1)}\norm{\epsilon_t(x_t;\theta) -x_0}^2,
\end{equation}
where $t,p_0(x_0), q(x_1), x_t$ as above, $\eps_t$ is the network and the VP scheduler is
\begin{equation}
    \quad \alpha_t= \xi_{1-t}, \quad \sigma_t = \sqrt{1-\xi_{1-t}^2},\quad \xi_{s}= e^{-\frac{1}{4}s^2(B-b) -\frac{1}{2}sb},
\end{equation}
where $B=20,\ b=0.1$. 

\paragraph{Architecture details.}
The Text-to-Image model has the same architecture as used by Dalle-2\cite{ramesh2022hierarchical} (2.2b parameters) with the following changes: we use the T5 text encoder~\cite{raffel2020exploring}, we have $4$ input/output channels, finally we also have an autoencoder with the same architecture of Stable Diffusion autoencoder~\cite{rombach2021highresolution}. Our CIFAR10, and class conditional ImageNet models have the U-Net architecture as in \citet{dhariwal2021diffusion}, with the hyper-parameters listed in Table \ref{tab:training_hyper-params}.
\begin{table}
\centering
\begin{tabular}{l c c c }
\toprule
  & CIFAR10  & ImageNet-64 & ImageNet-128  \\
  & FM-OT  & $\eps$-VP;FM-OT;FM/$v$-CS & FM-OT \\
\midrule
Channels & 128 & 196 & 256  \\
Depth  & 4  & 3  & 2  \\
Channels multiple  & 2,2,2 & 1,2,3,4 & 1,1,2,3,4  \\
Heads  & 1 & - & - \\
Heads Channels  & - & 64 & 64  \\
Attention resolution  & 16 & 32,16,8 & 32,16,8 \\
Dropout  & 0.3 & 1.0 & 0.0  \\
Effective Batch size  & 512 & 2048 & 2048  \\
GPUs  & 8 & 64 & 64  \\
Epochs  & 3000 & 1600 & 1437\\ 
Iterations  & 300k &  1M & 900k\\
Learning Rate  & 1e-4 & 1e-4 & 1e-4 \\
Learning Rate Scheduler  & constant & constant  &Poly Decay\\
Warmup Steps & - & - & 5k\\
P-Unconditional  & - & 0.2 & 0.2\\
Guidance scale  & - & 0.20 (vp,cs), 0.15 (ot) & 0.5\\
Total parameters count  & 55M & 296M & 421M\\
\bottomrule
\end{tabular}
\caption{CIFAR10 and ImageNet Pre-trained models' hyper-parameters. }
\label{tab:training_hyper-params}
\end{table}

\end{document}